\theoremstyle{plain}
\newtheorem{theorem}{Theorem}[section]
\newtheorem{lemma}[theorem]{Lemma}
\newtheorem{corollary}[theorem]{Corollary}
\theoremstyle{definition}
\theoremstyle{remark}
\newtheorem{remark}[theorem]{Remark}
\newcommand{\marc}[1]{}
 \newcommand{\anming}[1]{}
\newcommand{\sk}[1]{}
\def\eqref#1{equation~\ref{#1}}
\def\ceil#1{\lceil #1 \rceil}
\def\1{\bm{1}}
\def\vu{{\bm{u}}}
\def\vv{{\bm{v}}}
\def\mA{{\bm{A}}}
\def\mH{{\bm{H}}}
\def\mI{{\bm{I}}}
\def\mT{{\bm{T}}}
\DeclareMathAlphabet{\mathsfit}{\encodingdefault}{\sfdefault}{m}{sl}
\SetMathAlphabet{\mathsfit}{bold}{\encodingdefault}{\sfdefault}{bx}{n}
\newcommand{\E}{\mathbb{E}}
\newcommand{\Var}{\mathrm{Var}}
\DeclareMathOperator{\Tr}{Tr}
\definecolor{darkred}{rgb}{0.55, 0.0, 0.0}
\definecolor{denim}{rgb}{0.08, 0.38, 0.74}
\title{Compute-Optimal LLMs Provably Generalize Better with Scale}
\author{Marc Finzi \\
  Carnegie Mellon University 
  \And
  Sanyam Kapoor \\
  New York University 
  \And
  Diego Granziol \\
  PureStrength AI
  \And
  Anming Gu \\
  Boston University 
  \And
  Christopher De Sa\thanks{Equal advising.}\\
  Cornell University 
  \And
  J. Zico Kolter\footnotemark[1] \\
  Carnegie Mellon University 
  \And
  Andrew Gordon Wilson\footnotemark[1] \\
  New York University 
}
\begin{document}
\newcommand{\ignore}[1]{}

\maketitle

\begin{abstract}

Why do larger language models generalize better? To investigate this question, we develop generalization bounds on the pretraining objective of large language models (LLMs) in the compute-optimal regime, as described by the Chinchilla scaling laws. We introduce a novel, fully empirical Freedman-type martingale concentration inequality that tightens existing bounds by accounting for the variance of the loss function. This generalization bound can be decomposed into three interpretable components: the number of parameters per token, the loss variance, and the quantization error at a fixed bitrate. As compute-optimal language models are scaled up, the number of parameters per data point remains constant; however, both the loss variance and the quantization error decrease, implying that larger models should have \emph{smaller} generalization gaps. We examine why larger models tend to be more quantizable from an information theoretic perspective, showing that the rate at which they can integrate new information grows more slowly than their capacity on the compute-optimal frontier. From these findings we produce a scaling law for the generalization gap, with bounds that become predictably stronger with scale.

\end{abstract}

\section{Introduction}

Large language models (LLMs) have demonstrated a remarkable general-purpose problem solving capacity across a wide range of complex tasks, spanning natural language understanding (NLU) \citep{brown2020language}, forecasting \citep{gruver2023timeshot}, mathematics \citep{Trinh2024SolvingOG}, spatial reasoning \citep{Patel2022MappingLM}, and many other areas. For a majority of individual tasks, model capabilities increase monotonically as the next token prediction loss from the pretraining objective decreases. 

A conceptually useful story about the learning process involves the model accommodating predictive subprograms of progressively larger computational depth and complexity. During pretraining, shallow details like word frequencies, syntax, and grammar are absorbed first, followed by higher level structures such as facts, relations, and idioms, eventually giving way to yet higher level patterns. For reasons not yet well understood, this process manifests in the pretraining objective as a power law for LLMs and other generative models on natural data. The frontier of best achievable performance given a fixed computational budget $C$ obeys a predictable power law relationship $L(C) \propto C^{-\gamma}$ over many orders of magnitude \citep{Kaplan2020ScalingLF}, varying considerably with the kind of data \citep{henighan2020scaling}, but only weakly with model architecture and training method \citep{Bahri2021ExplainingNS}.

Effort in quantifying \emph{what} this relationship is in a given domain and \emph{how} it varies as model size and dataset size are traded off has been extremely valuable in guiding where resources are spent in constructing more capable AI models \citep{brown2020language,besiroglu2024chinchilla,Achiam2023GPT4TR,Dubey2024TheL3} and charting a path for the future. In this work, we target the \emph{why} of scaling laws. 
While mathematically simple toy models or toy data are valuable, we aim to study the why of scaling laws on real models and real data by focusing on one contribution to the scaling law curve: the token-wise generalization gap. Constructing a generalization bound sensitive enough to capture the small differences between architectures and yet simple enough to write down in a short formula is likely impossible; however, even the broad strokes of behavior such as how generalization scales with compute have not been addressed. Thus, here our focus is on high-level understanding rather than algorithmic intervention. 

We can observe that in order for the generalization gap not to eventually dominate the contributions to the test loss (which has not yet been observed), it must be that the generalization gap decreases at least as fast as the approximation gap (training loss) does. Deeply understanding the success of the compute-optimal scaling for LLMs requires being able to predict that this would be the case.

In order to construct the relevant generalization bounds, we introduce a novel empirical Freedman concentration inequality \citep{Freedman1975OnTP}. Our generalization bound highlights three critical components---the ratio of parameters per token in compute-optimal scaling (which is roughly constant), the token-wise loss variance (which decreases with model size), and the performance gap between quantized and unquantized models (which also decreases with model size). As an alternative to quantization, we also bound the information transfer between dataset and the model, showing that the information content in the model grows sublinearly with model size, and thus the complexity decreases with model size. These components collectively contribute to a predictable reduction in the generalization gap as models grow larger.

\section{Background}

\subsection{Generalization Bounds}

At a high level, we are interested in the expected test error (population risk) $\E_{X'\sim p_\mathcal D} [R_{h(X)}(X')]$ for a given model (hypothesis) $h$ depending on the training set $X$ but evaluated on a test set $X'$ sampled from the data distribution $p_\mathcal D$. One conceptually convenient way of breaking down this quantity is into the irreducible error, approximation gap, and generalization gap:\footnote{We note this differs from the commonly referred to estimation-approximation error breakdown \citep{bottou2007tradeoffs} or the bias-variance decomposition \citep{brown2024bias}; however, the train error-generalization gap is more useful for our purposes.}

\begin{equation*}
        \E_{X'\sim p_D} [R_{h(X)}(X')] = \textcolor{violet}{\underbrace{R_{*}(X)}_{\makebox[0pt]{\text{\scriptsize Irreducible Error $E$}}}}+\textcolor{brown}{\underbrace{R_{h(X)}(X)-R_{*}(X)}_{\makebox[0pt]{\text{\scriptsize Approximation Gap $A$}}}} +\textcolor{teal}{\underbrace{\E_{X'\sim p_D} [R_{h(X)}(X')]-R_{h(X)}(X)}_{\makebox[0pt]{\text{\scriptsize Generalization Gap $G$}}}}.
\end{equation*}

The first term describes the entropy of natural text, e.g. the amount of truly random information content in the data, which cannot be further explained even when knowing the true data generating process. The second term describes the approximation gap, capturing the extent to which the trained model is able to fit the training data. 
This term combines both model capacity, e.g. as described by universal approximation theorems \citep{cybenko1989}, as well as optimization via how well the training algorithm is able to find the given solution. Finally, we have the generalization gap, capturing the extent to which training and testing performance diverge on account of overfitting to the statistically irrelevant regularities in $X$. Though generalization bounds focus on the last term, all three quantities are of interest for understanding LLM behavior. We aim to understand why it has been empirically observed that the generalization gap for LLMs (at least in the low epoch regime) tends to be extremely small compared to the other two terms, making the Chinchilla scaling laws possible.

Among the simplest generalization bounds is the countable hypothesis with prior generalization bound applied to IID data \citep{shalev2014understanding}. With probability at least $1-\delta$,
\begin{equation*}
    \E_{X'\sim p_\mathcal D} [R_{h(X)}(X')]-R_{h(X)}(X) \le \Delta\sqrt{\frac{\log1/P(h)+\log1/\delta}{2m}}
\end{equation*}
where $m$ is the number of IID data points, $\Delta$ is an upper bound on the range of values the risk can take, and $P(h)$ is a prior distribution over hypotheses in a discrete hypothesis class $\mathcal{H}$. With a judicious choice of prior, $\log 1/P(h)$ can be related to the compressed size of the model measured in nats \citep{lotfi2022pac}. This prior, a version of the universal prior \citep{solomonoff1964formal}, is an instantiation of Occam's razor and favors hypotheses that can be compressed to short programs.

During text pretraining, the individual tokens are not sampled IID. Thus, a generalization bound requires treating entire documents (often thousands of tokens) as the elements the empirical risk is computed over. Note that modern language models have hundreds of times more parameters than documents they were trained on. With the help of very extreme compression methods and using smoothing to bound $\Delta$, it is possible to construct nonvacuous bounds \citep{lotfi2024nonvacuous}. However, the required compression (greater than 100 times) is so severe that it cripples model performance. 

In a recent work, \citet{lotfi2024unlocking} explore breaking down generalization into tokenwise generalization, e.g., how the loss varies with each individual predicted token being resampled under the distribution but keeping the context the same. Splitting up the training dataset $X$ into the sequence of tokens $[X_k]_{k=1}^D$, they bound
\begin{equation*}
    T = \frac{1}{D}\sum_{k=1}^D\E[R_h(X_k\mid X_{<k})\mid X_{<k}] - R_h(X),
\end{equation*}
where $R_h(X_k\mid X_{<k})$ is the negative log likelihood for token $k$ given the context $X_{<k}$, and the expectation is taken with respect to $p(X_k|X_{<k})$ from the data distribution. The authors bound $T$ using Azuma's inequality to arrive at a bound scaling as $\Delta \sqrt{\frac{\log1/P(h)}{2D}}$. Using a novel empirical Freedman type inequality, we bound the same quantity $T$ but improve upon this bound, reducing the leading term to $\Delta \big(\frac{\log1/P(h)}{D}\big)$.

\subsection{Chinchilla Scaling Laws}

A key insight from the current machine learning paradigm is that the dataset should not be considered a fixed quantity. 
Rather than optimizing to find the best model for a given dataset, one should instead try to find the best performing model and dataset for a given computational budget. \citet{Hoffmann2022TrainingCL} describes the optimal allocation of resources for increasing the size of the model and increasing the size of the dataset under the assumption that data is plentiful relative to the computational budget.

Let $N$ be the number of parameters and $D$ be the number of training tokens. In the one epoch regime of LLM pretraining, the negative log likelihood (NLL) loss is well-approximated by the power law
\begin{equation*}
    R(N, D) = E + \frac{A}{N^\alpha} + \frac{B}{D^\beta},
\end{equation*}
where $A,B$ are empirically estimated constants, exponents $\alpha,\beta$ have similar values, and $E$ is the irreducible error. Optimizing $N(C)$ and $D(C)$ under the constraint of a fixed compute budget $C\approx 6ND$ \citep{Kaplan2020ScalingLF}, one arrives at
\begin{equation*}
    N^*(C) = G(C/6)^a, \qquad D^*(C) = G^{-1} (C/6)^b
\end{equation*}
for constants $G=\big(\frac{\alpha A}{\beta B}\big)^{1/(\alpha+\beta)}$, $a=\beta/(\alpha+\beta)$, and  $b = \alpha/(\alpha+\beta)$, as in \citet{Hoffmann2022TrainingCL}.

Within the margin of statistical error, we have $a=b=0.5$ in the optimal allocation of compute \citep{besiroglu2024chinchilla}. Therefore, the ratio of parameters per token, $N^*(C)/D^*(C)=G^2$, is a fixed constant. Evaluating the constants from \citet{besiroglu2024chinchilla}, we have $G^2 \approx 1/20$. We remark that many open source models optimize performance amortized over both training time and inference time compute \citep{sardana2023beyond}, which leads to ``smaller than Chinchilla  optimal models,'' e.g. models with a ratio $N/D<G^2$,
\begin{wrapfigure}{h}{0.45\linewidth}
    \vspace{-0.15in}
    \centering
    \includegraphics[width=\linewidth]{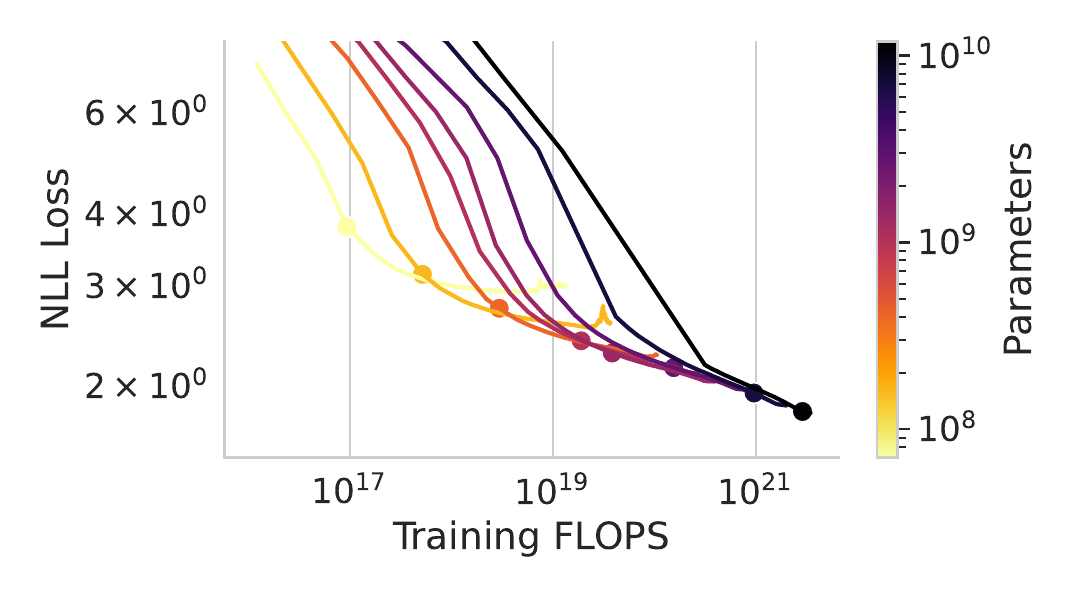}
    \caption{Pythia models and checkpoints chosen along the compute-optimal frontier (checkpoints given by the marked values).}
    \label{fig:checkpoints}
    \vspace{-0.2in}
\end{wrapfigure}
 and similarly when repeating data in the data constrained setting \citep{muennighoff2024scaling}.
 In the context of this paper, we will assume the Chinchilla optimal scaling $N/D=G^2$, and remark that any generalization bounds we construct would only be tighter if the ratio $N/D$ is smaller.

To test our theory, we use the open source Pythia model family \citep{biderman2023pythia} ranging from $70$ million to $12$ billion parameters. Unlike other open source LLMs, we have full access to both the Pythia model checkpoints from training and the Pile dataset they were trained on \citep{gao2020pile}, which is required for our analysis. 
From these intermediate checkpoints, we choose the set of models along the compute-optimal frontier to match $N/D=G^2 \approx 1/20$, reflecting the choice for number of training steps and model size that one would have made optimizing only for performance at the given computational budget. The chosen checkpoints are plotted in the training frontier of these models in \autoref{fig:checkpoints}. 

\section{Generalization Bound}\label{sec:components}

In this section, we build the components used in constructing our final generalization bound stated in \autoref{eq:full_theorem}. To capture the relevant behavior, we derive a new concentration inequality for martingales. We apply a prior weighted union bound to this concentration inequality so that we can apply it to models in a large hypothesis class, taking advantage of the low complexity inherent in compressible models. Bounding the worst case loss behavior using prediction smoothing, we apply this bound to LLMs.

At a high level, we can motivate the overall behavior of the bounds as follows. On account of the compute optimal scaling, as LLMs are scaled, the ratio of parameters to training tokens remains a fixed constant, $G^2\approx 1/20$, less than one. However as the models improve, the per token standard deviation of the loss decreases, and at a predictable rate of approximately $c+1/\sqrt{N}$. If the per token variation is small, then a sufficiently sensitive concentration inequality will lead to a tighter concentration around the mean. Simultaneously, the amount of information stored per parameter, and thus the compressed size of the model given a suitable compression scheme, appears to decrease with scale at the compute optimal frontier. Combining these observations, we produce generalization bounds showing that the gap between train and test shrinks as these models are scaled up. 

\subsection{An Empirical Freedman's Concentration Inequality}

\begin{theorem}\label{eq:freedman_inequality_maintext}
Let $(X_k)_{k=1}^n = X_1, \dots , X_n$ and $(Y_k)_{k=1}^n$ be sequences of random variables adapted to the filtration $(\mathcal{F}_{k})_{k=0}^n$ where $X_k$ is $\mathcal{F}_{k}$ measurable and $Y_k$ is $\mathcal{F}_{k-1}$ measurable. Assume the difference between the two is bounded below: $A_k=(Y_{k}-X_k)/\Delta > -1$ for some $\Delta \ge 0$. Let $K$ be any finite subset of $(0,1)$. Then, with probability at least $1-\delta$ over the probability space (filtered by $(\mathcal{F}_{k})_{k=0}^n$),
\begin{equation}\label{eq:freedman_ci}
    \frac{1}{n}\sum_{k=1}^n\big(\E[X_k\mid\mathcal{F}_{k-1}]-X_k)\big) \le \Delta  C +\Sigma\sqrt{C},
\end{equation}
where $C := \frac{1}{n}\log\frac{|K|}{\delta}$, and
$$\Sigma = \Sigma(C,\Delta, \{Y_k-X_k\}_{k=1}^n,K) := \min\limits_{s \in K} \Delta\sqrt{C}(1-s)/s   +\frac{\Delta}{\sqrt{C}} \frac{1}{n}\sum_{k=1}^n v\big(sA_k\big)/s$$
and $v(x) = x-\log(1+x)$.
\end{theorem}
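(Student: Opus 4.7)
My plan is to prove the inequality for each fixed $s\in(0,1)$ via an exponential supermartingale argument, then close with a union bound over the finite grid $K$. The first step is a one-line algebraic reduction: because $Y_k$ is $\mathcal{F}_{k-1}$-measurable,
\begin{equation*}
\E[X_k\mid\mathcal{F}_{k-1}] - X_k \;=\; \Delta\big(A_k - \E[A_k\mid\mathcal{F}_{k-1}]\big),
\end{equation*}
so the task reduces to a high-probability upper bound on $\sum_{k=1}^n (A_k - \E[A_k\mid\mathcal{F}_{k-1}])$, where the $A_k$ are only bounded on one side ($A_k > -1$) and Hoeffding-type arguments are therefore unavailable.

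The central observation is that the definition $v(x) = x - \log(1+x)$ gives the \emph{exact} identity $e^{sA_k - v(sA_k)} = 1 + sA_k$, valid because $s\in(0,1)$ and $A_k > -1$ keep $1+sA_k$ strictly positive. Taking conditional expectations and then applying $1+u \le e^u$ with $u = s\E[A_k\mid\mathcal{F}_{k-1}]$ yields
\begin{equation*}
\E\!\left[\exp\!\big(sA_k - v(sA_k) - s\E[A_k\mid\mathcal{F}_{k-1}]\big)\,\big|\,\mathcal{F}_{k-1}\right] \;=\; \frac{1+s\E[A_k\mid\mathcal{F}_{k-1}]}{e^{s\E[A_k\mid\mathcal{F}_{k-1}]}} \;\le\; 1,
\end{equation*}
so $M_n^{(s)} := \exp\!\big(\sum_{k=1}^n[sA_k - v(sA_k) - s\E[A_k\mid\mathcal{F}_{k-1}]]\big)$ is a nonnegative supermartingale with $\E M_n^{(s)}\le 1$. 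Markov's inequality then gives, for each fixed $s\in K$, that $\log M_n^{(s)} < \log(|K|/\delta)$ with probability at least $1-\delta/|K|$.

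A union bound over $s\in K$ yields, on an event of probability at least $1-\delta$, that simultaneously for every $s\in K$,
\begin{equation*}
\tfrac{1}{n}\sum_{k=1}^n\big(A_k - \E[A_k\mid\mathcal{F}_{k-1}]\big) \;\le\; \tfrac{C}{s} \;+\; \tfrac{1}{sn}\sum_{k=1}^n v(sA_k),
\end{equation*}
with $C := \tfrac{1}{n}\log(|K|/\delta)$. Multiplying through by $\Delta$, substituting back the definition of $A_k$, and decomposing $1/s = 1 + (1-s)/s$ cleanly separates the right-hand side into $\Delta C$ plus $\sqrt{C}\cdot\big[\Delta\sqrt{C}(1-s)/s + (\Delta/\sqrt{C})\tfrac{1}{n}\sum_k v(sA_k)/s\big]$; taking the minimum over $s\in K$ identifies the bracketed expression with $\Sigma$ and recovers the stated bound. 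The only delicate point is the supermartingale step itself: the one-sided hypothesis $A_k > -1$ is precisely what keeps $1+sA_k$ positive for $s\in(0,1)$, and the resulting exact identity (rather than a looser Bernstein/Bennett-style MGF inequality) is what lets the leading coefficient in front of $C$ be exactly $\Delta$, which is the tightening over the Azuma-based bound used in prior work.
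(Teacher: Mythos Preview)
Your proof is correct and follows essentially the same route as the paper: the supermartingale you build is exactly the paper's $M_k=\exp\big(t(\E[X_k\mid\mathcal F_{k-1}]-X_k)-v(t(Y_k-X_k))\big)$ after the change of variables $s=t\Delta$, and your verification of the supermartingale property via the identity $e^{x-v(x)}=1+x$ together with $1+u\le e^u$ is precisely the content of the paper's Lemma~A.1. The only cosmetic differences are that you work directly in the $(A_k,s)$ parameterization from the outset and invoke Markov's inequality at the fixed terminal time $n$ rather than Ville's inequality, which is all the stated theorem requires.
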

We prove the proof in Appendix \ref{sec:freedman}. This concentration inequality states that the sequence of random variables concentrates on their conditional means with a term $\Sigma$ depending on the empirical variation of the loss value. We note that $\Sigma$ can be viewed as a variance term. As we show in Appendix \ref{app:proofs}, using a small $K$, the
variance proxy can be upper bounded:
    $\Sigma \le 2\sqrt{\frac{1}{n}\sum_{k=1}^n (X_k-Y_{k})^2}$,
explicitly related to the empirical variance but with the mean replaced by $Y_{k}$. Although the minimization form above is unwieldy, it produces significantly tighter estimates of $\Sigma$ (a factor of $\sim 5$x smaller). When the loss variation $\Sigma$ is small, concentration to the mean happens at a rate linear in the complexity $C$ rather than the slower $\sqrt{C}$ rate. The finite set $K$ serves merely as a device to control how finely $s$ is optimized, and can be set for example as a uniformly spaced grid in $(0,1)$.

The concentration inequality we present here provides the core result for our generalization bounds, and to the best of our knowledge it is the first martingale concentration inequality to incorporate a variance term which can be evaluated on the original data. We can view this bound as aiming to achieve the benefits that Freedman's inequality has over Azuma's inequality while being fully empirical, replacing the population variance with a fully empirical proxy. Our approach is analogous to the fully empirical Bernstein bound derived in \citet{maurer2009empirical}, but in the martingale rather than IID setting. Unfortunately, the proof technique of \citet{maurer2009empirical} does not carry over to the martingale case and instead we take a different approach.  We derive our concentration inequality in \autoref{eq:freedman_inequality} making use of a proxy $Y_k$ that is $\mathcal{F}_{k-1}$-measurable but which can take the place of $\mathbb{E}[X_k\mid\mathcal F_{k-1}]$ in the variance. In practice, we choose this quantity to be the mean of the model NLL under resampling of the given token according to the \emph{model} distribution in place of the data distribution. 

\subsection{Extending to a Discrete Hypothesis Class}
From the concentration inequality in \eqref{eq:freedman_ci}, we derive the following discrete hypothesis class generalization bound.

\begin{lemma}\label{eq:generalization_bound_maintext}
Let $X_1, \dots , X_n$ be a sequence of (possibly dependent) random variables. Let $R_h(X_k\mid X_{<k})$ denote the risk for element $X_k$ given the previous elements of the sequence for hypothesis $h$ in a countable hypothesis class $\mathcal{H}$. Let $p_h(X_k\mid X_{<k})$ be any (hypothesis dependent) distribution over $X_k$ conditioned on $X_{<k}$. Consider a prefix free coding of each $h\in \mathcal{H}$ and let $L(h)$ be the length of that code measured in nats. Let $K$ be a finite subset of $\mathbb{R}^+$. Assuming $R_h(X_k\mid X_{<k})-\mathbb{E}_{Y_k\sim p_h}[R_h(Y_k\mid X_{<k})] \le \Delta$ for some $\Delta >0$, we have that simultaneously for all $h \in \mathcal{H}$, with probability at least $1-\delta$,
\begin{equation}
    \frac{1}{n}\sum_k\E[R_h(X_k\mid X_{<k})\mid X_{<k}] \le \frac{1}{n}\sum_k R_h(X_k\mid X_{<k}))+\Delta\mathcal C +\Sigma \sqrt{\mathcal C},
\end{equation}
where the complexity $\mathcal C$ is given by
$$\mathcal C := \frac{L(h)+\log |K|/\delta}{n}$$
and
$\Sigma=\Sigma(\mathcal C,\Delta, \{A_k\}_{k=1}^n,K)$ is as in \autoref{eq:freedman_inequality_maintext}
for $A_k = R_h(X_k\mid X_{<k})-\mathbb{E}_{Y_k\sim p_h}[R_h(Y_k\mid X_{<k})]$.
\end{lemma}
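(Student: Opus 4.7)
The plan is to apply Theorem~\ref{eq:freedman_inequality_maintext} to each fixed hypothesis $h \in \mathcal{H}$ with a hypothesis-adapted confidence level, and then combine the resulting tail bounds via a Kraft-inequality union bound that exploits the prefix-free code $L$. This is the standard ``countable hypothesis class with prior'' template, adapted from the Azuma setting to the empirical Freedman concentration inequality just proved.

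Fix $h \in \mathcal{H}$ and let $\mathcal{F}_k = \sigma(X_1,\dots,X_k)$. Identify the sequences in Theorem~\ref{eq:freedman_inequality_maintext} by setting $\tilde X_k := R_h(X_k \mid X_{<k})$ and $Y_k := \mathbb{E}_{Y \sim p_h(\cdot \mid X_{<k})}[R_h(Y \mid X_{<k})]$. Since $h$ is fixed, $Y_k$ is a deterministic function of $X_{<k}$ and is therefore $\mathcal{F}_{k-1}$-measurable, while $\tilde X_k$ is $\mathcal{F}_k$-measurable; the hypothesis $A_k = (Y_k - \tilde X_k)/\Delta > -1$ is precisely the stated boundedness assumption. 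Applying Theorem~\ref{eq:freedman_inequality_maintext} with confidence level $\delta_h := \delta\,e^{-L(h)}$ in place of $\delta$ yields, with probability at least $1 - \delta_h$,
\begin{equation*}
\frac{1}{n}\sum_{k=1}^n \bigl(\mathbb{E}[\tilde X_k \mid \mathcal{F}_{k-1}] - \tilde X_k\bigr) \le \Delta\,\mathcal{C} + \Sigma\sqrt{\mathcal{C}},
\end{equation*}
with $\mathcal{C} = \tfrac{1}{n}\log(|K|/\delta_h) = (L(h) + \log(|K|/\delta))/n$, matching the complexity term in the lemma. Because $\mathcal{F}_{k-1}$ is generated by $X_{<k}$, $\mathbb{E}[\tilde X_k \mid \mathcal{F}_{k-1}] = \mathbb{E}[R_h(X_k \mid X_{<k}) \mid X_{<k}]$, and rearranging puts the empirical risk on the right-hand side in exactly the form claimed.

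Taking a union bound over $h \in \mathcal{H}$ and using that $L$ is a prefix-free code, Kraft's inequality gives $\sum_{h\in\mathcal{H}} e^{-L(h)} \le 1$, so $\sum_{h \in \mathcal{H}} \delta_h \le \delta$. Hence the probability that the per-$h$ bound fails for at least one hypothesis is at most $\delta$, and the stated inequality holds simultaneously for all $h \in \mathcal{H}$ with probability at least $1 - \delta$. The only delicate point is measurability bookkeeping: one must verify that the model predictive distribution $p_h(\cdot \mid X_{<k})$ depends on the past but not on $X_k$, so that $Y_k$ is $\mathcal{F}_{k-1}$-measurable and Theorem~\ref{eq:freedman_inequality_maintext} can be invoked verbatim with $\Sigma$ computed using the same $\mathcal{C}$. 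Beyond this, no new probabilistic argument is required.
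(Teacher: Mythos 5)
Your proposal is correct and follows essentially the same route as the paper: apply Theorem~\ref{eq:freedman_inequality_maintext} to the sequence $R_h(X_k\mid X_{<k})$ with the $\mathcal{F}_{k-1}$-measurable proxy $Y_k=\mathbb{E}_{Y\sim p_h}[R_h(Y\mid X_{<k})]$ at per-hypothesis level $\delta_h=\delta e^{-L(h)}$, then union bound using the Kraft inequality for the prefix-free code (the paper phrases this as a prior-weighted union bound with $P(h)$, which is the same thing). Your measurability remark and the resulting complexity $\mathcal{C}=(L(h)+\log(|K|/\delta))/n$ match the paper's argument.
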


We provide the proof in Appendix \ref{app:generalization-bound}.

\subsection{Worst Case Behavior and Smoothing}
The last component of our bounds is the smoothing to bound the worst case behavior of the model, which in general for the negative log likelihood can be arbitrarily large. We employ the prediction smoothing idea from \citet{lotfi2024nonvacuous}, where the model is mixed with a uniform distribution over the tokens with a given mixing parameter. Unlike application in previous work, we optimize over this parameter analytically so that we can remove it from the bounds and evaluation entirely, instead of merely as a tool for constructing bounds while all evaluations are with the unsmoothed model.

\begin{lemma}

For the categorical negative log likelihood objective ${\hat{R}_h = -\frac{1}{n}\sum_{k=1}^n\log p_h(X_k\mid X_{<k})}$ on $V$ classes and $C\in \mathbb{R}^+$, there exists a prediction smoothed model ${p_s(\cdot) = (1-\alpha)p_h(\cdot)+\alpha/V}$ which has a worst case loss ${\Delta_s = \sup_{X_k, X_{<k}}-\log p_s(X_k\mid X_{<k}) \le \log (V/\alpha)}$, and the risk satisfies
\begin{equation}
    \hat{R}_s +C\Delta_s \le \hat{R}_h+ C\log V+\sqrt{2C},
\end{equation}
for some value $\alpha\in (0,1)$ (approximately $C/(1+C)$).
\end{lemma}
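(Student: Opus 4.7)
The plan is to derive two deterministic pointwise bounds on the smoothed predictor $p_s = (1-\alpha)p_h + \alpha/V$, combine them, and then optimize over the mixing parameter $\alpha$. The key observation is that $p_s$ admits two complementary pointwise lower bounds, $p_s \ge \alpha/V$ and $p_s \ge (1-\alpha)p_h$, which control the worst-case loss $\Delta_s$ and the smoothed empirical risk $\hat{R}_s$ respectively. From the first, $-\log p_s(X_k \mid X_{<k}) \le \log(V/\alpha)$ uniformly, giving $\Delta_s \le \log(V/\alpha)$. From the second, $-\log p_s(X_k \mid X_{<k}) \le -\log(1-\alpha) - \log p_h(X_k \mid X_{<k})$; averaging over the $n$ tokens yields $\hat{R}_s \le \hat{R}_h - \log(1-\alpha)$.

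Combining the two estimates gives $\hat{R}_s + C\Delta_s \le \hat{R}_h + C\log V + g(\alpha)$, where $g(\alpha) := -\log(1-\alpha) - C\log\alpha$. Solving $g'(\alpha) = 1/(1-\alpha) - C/\alpha = 0$ produces the optimal mixing $\alpha^\star = C/(1+C)$, at which $g(\alpha^\star) = (1+C)\log(1+C) - C\log C$. Setting $p = C/(1+C)$ so that $1-p = 1/(1+C)$ and $p(1-p) = C/(1+C)^2$, this value is exactly $(1+C)H(p)$, with $H$ the binary entropy in nats. This value is strictly below the trivial bound $(1+C)\log 2$ and already captures the correct asymptotic behavior in both the small-$C$ and large-$C$ regimes.

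The main obstacle, and the only non-mechanical step, is the remaining analytic inequality $(1+C)H\bigl(C/(1+C)\bigr) \le \sqrt{2C}$, equivalently the quadratic entropy bound $H(p) \le \sqrt{2p(1-p)}$ on $[0,1]$: once in hand, multiplying by $(1+C)$ gives $(1+C)\sqrt{2p(1-p)} = \sqrt{2C}$ and the lemma follows. I would prove this inequality by studying $\psi(C) := \sqrt{2C} - (1+C)\log(1+C) + C\log C$, noting that $\psi(0) = 0$ (since $\lim_{C\to 0} C\log C = 0$) and $\psi(C) \to \infty$ as $C \to \infty$, then using $\psi'(C) = 1/\sqrt{2C} - \log(1 + 1/C)$ to show that $\psi$ has exactly one interior local maximum and one interior local minimum, and verifying directly that $\psi$ remains nonnegative at the latter. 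This step reduces to a one-dimensional calculus verification—elementary but the only place where care is needed—rather than any deeper probabilistic argument, and it also explains why the lemma states that $\alpha^\star$ is only \emph{approximately} $C/(1+C)$: the exact optimizer minimizes $g$, while $\sqrt{2C}$ is a clean but slightly loose upper envelope.
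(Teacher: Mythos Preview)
Your proof is correct and follows essentially the same route as the paper: bound $\Delta_s$ via $p_s \ge \alpha/V$, bound $\hat R_s - \hat R_h$ via a pointwise lower bound on $p_s$ in terms of $p_h$, optimize over $\alpha$, and finish with the scalar inequality $(1+C)\log(1+C) - C\log C \le \sqrt{2C}$ (which the paper simply asserts, while you sketch a proof). The only difference is that the paper uses the slightly sharper lower bound $p_s \ge (1-\alpha+\alpha/V)\,p_h$ (valid because $p_h \le 1$), which shifts the exact optimizer to $\alpha = \tfrac{VC}{(V-1)(1+C)}$---this is the source of the ``approximately $C/(1+C)$'' in the statement, not the looseness of the $\sqrt{2C}$ envelope as you conjectured---but after the paper bounds $\log(V-1)\le \log V$ the two arguments coincide exactly.
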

We provide the proof in Appendix \ref{app:prediction-smoothing}.

\subsection{Generalization Bound for Compute Optimal Language Models}
Finally, we assemble these three components into a generalization bound that we can empirically evaluate for language models. Combining the prediction smoothing bound with \autoref{eq:generalization_bound_maintext} applied to the smoothed quantized model produces our main result. Note that each term in the expression has an interpretable meaning.
\begin{theorem}\label{eq:full_theorem}
Let $X_1, \dots , X_D$ be the sequence of $D$ (possibly dependent) tokens formed from concatenating each sequence in the dataset together into a single stream of tokens. Let $R_h(X_k\mid X_{<k})=-\log p_h(X_k\mid X_{<k})$ denote the NLL for element $X_k$ given the previous elements for a given model $h$ and with vocabulary size $V$ and $N$ parameters. Let $\hat{R}_h=\frac{1}{D}\sum_{k=1}^D R_h(X_k\mid X_{<k})$ be the empirical risk and $R_h=\frac{1}{D}\sum_{k=1}^D \mathbb{E}[R_h(X_k\mid X_{<k}) \mid X_{<k}]$ be the tokenwise expected risk for that model. Let $K$ be a finite subset of $(0,1)$. For any given quantization $q$ of $h$ using $b$ bits per parameter with expected risk $R_q$, there exists a label smoothed and quantized model $sq$ with $R_{sq}(X_k\mid X_{<k}) = (1-\alpha)R_{q}(X_k\mid X_{<k})+\alpha/V$ for fixed $\alpha \in (0,1)$ which, with probability $1-\delta$, achieves a tokenwise population risk 
\begin{equation}
        R_{sq} \le \hat{R}_h+\quad \mathcal C\textcolor{brown}{\underbrace{\log V}_{\makebox[0pt]{\text{\scriptsize Random Guess NLL}}}}\quad+\quad \textcolor{teal}{\underbrace{\Sigma \sqrt{\mathcal C} }_{\makebox[0pt]{\text{\scriptsize Loss Variation}}}}\quad +\quad\textcolor{darkred}{\underbrace{\sqrt{2\mathcal C}}_{\makebox[0pt]{\text{\scriptsize Smoothing Cost}}}}\quad  +\quad \textcolor{violet}{\underbrace{(\hat{R}_q-\hat{R}_h)}_{\makebox[0pt]{\text{\scriptsize Quantization Gap}}}},
\end{equation}
where the complexity $\mathcal C$ is given by
\begin{equation*}
    \mathcal C = \big(\tfrac{N}{D}\big)b\log2+\tfrac{1}{D}\log\tfrac{|K|}{\delta},
\end{equation*} and $\Sigma=\Sigma(\mathcal C,\Delta, \{A_k\}_{k=1}^n,K)$ (defined in \autoref{eq:generalization_bound_maintext}) can be upper bounded in terms of the empirical loss variance:
\begin{equation*}
    \Sigma \le 2\sqrt{\tfrac{1}{D}\sum_{k=1}^D\big(R_q(X_k\mid X_{<k})-\mathbb{E}_{Y_k\sim p_h}[R_q(Y_k\mid X_{<k})]\big)^2}.
\end{equation*}

\end{theorem}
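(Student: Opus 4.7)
The plan is to assemble the three ingredients constructed in \autoref{sec:components}---the empirical Freedman concentration inequality, its discrete-hypothesis-class lift (\autoref{eq:generalization_bound_maintext}), and the prediction smoothing lemma---into a single statement applicable to large language models. The key move is to apply the discrete-hypothesis-class bound to the smoothed quantized model $sq$, absorb the unwieldy worst-case-loss factor $\Delta$ via the smoothing lemma, and identify the prior-weighted code length with the quantization bitrate.

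First I would fix the hypothesis class to be the $2^{bN}$ possible $b$-bit-per-parameter quantizations of the $N$-parameter architecture and place a uniform prior on this set, so that $L(q) = bN\log 2$ nats for every quantized model $q$. Substituted into the complexity expression from \autoref{eq:generalization_bound_maintext}, this gives exactly
\begin{equation*}
\mathcal{C} = \frac{bN\log 2 + \log(|K|/\delta)}{D} = \big(\tfrac{N}{D}\big)b\log 2 + \tfrac{1}{D}\log\tfrac{|K|}{\delta},
\end{equation*}
matching the $\mathcal{C}$ in the theorem. Applying \autoref{eq:generalization_bound_maintext} to the smoothed quantized model $sq$, with $\Delta_{sq} \le \log(V/\alpha)$ supplied by the smoothing lemma, produces $R_{sq} \le \hat{R}_{sq} + \Delta_{sq}\mathcal{C} + \Sigma\sqrt{\mathcal{C}}$. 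Next I would invoke the smoothing lemma (with the analytically optimized mixing parameter $\alpha$) to replace $\hat{R}_{sq} + \Delta_{sq}\mathcal{C}$ by $\hat{R}_q + \mathcal{C}\log V + \sqrt{2\mathcal{C}}$, and finally add and subtract $\hat{R}_h$ to isolate the quantization gap $\hat{R}_q - \hat{R}_h$ as its own term, reaching the decomposition claimed.

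The main subtlety is that the variance proxy $\Sigma$ must be stated in terms of the loss variation of the unsmoothed quantized model $R_q$ rather than of $R_{sq}$, since the former is what one actually evaluates. This requires checking that the increments $A_k$ appearing inside $\Sigma$, though nominally defined for the smoothed model, are controlled by the corresponding increments for $R_q$: because smoothing corresponds to mixing $p_q$ with the uniform distribution $\alpha/V$, the log-likelihood differences can only be dampened relative to those of $p_q$ in the bulk and are uniformly controlled by $\log(V/\alpha)$ in the tail, so the quoted empirical variance bound on $\Sigma$ in terms of $R_q$ increments remains valid after applying the upper bound $\Sigma \le 2\sqrt{\frac{1}{D}\sum_k (X_k-Y_k)^2}$ from the discussion following \autoref{eq:freedman_inequality_maintext}. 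A minor bookkeeping item is to verify the filtration hypothesis of the Freedman inequality, taking $\mathcal{F}_k = \sigma(X_{\le k})$ and $Y_k := \mathbb{E}_{Y_k\sim p_h}[R_q(Y_k\mid X_{<k})]$, which is $\mathcal{F}_{k-1}$-measurable as it depends only on the context $X_{<k}$; this is inherited directly from the proof of \autoref{eq:generalization_bound_maintext}. Everything else is straightforward substitution.
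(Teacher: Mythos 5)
Your proposal matches the paper's own construction essentially step for step: the paper likewise instantiates the code length as $L(q)\le bN\log 2$ nats for a fixed $b$-bit quantization (using that a fixed-length code needs no prefix-free overhead), applies Lemma~\ref{eq:generalization_bound_maintext} to the smoothed quantized model with $\Delta_{sq}\le\log(V/\alpha)$, absorbs $\hat{R}_{sq}+\mathcal{C}\Delta_{sq}$ via the smoothing lemma with the analytically optimized $\alpha$, and adds and subtracts $\hat{R}_h$ to expose the quantization gap. Your discussion of why $\Sigma$ can be stated through the increments of $R_q$ rather than $R_{sq}$ addresses a point the paper itself passes over silently (its statement of $R_{sq}$ as a mixture of risks rather than of probabilities makes this immediate under a literal reading), so no substantive gap relative to the paper's argument remains.
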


To make sense of the bound, let's consider the various terms. The bounded quantity on the left hand side, $R_{sq}$, is the expected tokenwise risk of the smoothed and quantized version of hypothesis $h$. The bound is written in terms of the empirical risk of the original model $h$, with $\hat{R}_q-\hat{R}_{sq}$ controlled by the \textcolor{darkred}{smoothing cost} and \textcolor{violet}{quantization gap}. Typically, the largest contribution to the bound is $\mathcal C\log V$, e.g. the complexity times the \textcolor{brown}{negative log likelihood of random guessing}. The \textcolor{teal}{loss variation} relates to how spread the empirical loss is and can be seen as a model realizability term. If there existed a zero-loss model in the model class, then this term could be brought to zero; however, given the non-zero entropy of natural text, the loss entropy will not be zero. Note that as models improve and approach the irreducible error, so too will the empirical loss variation.

In this setup, the complexity $\mathcal{C}$ is just the ratio of parameters to data points, $\frac{N}{D} = G^2$, times the number of bits per parameter used in the quantization $b$, plus a negligible additional term. The decreased complexity of using fewer bits for $b$ trades off with the quantization gap, and in principle this parameter should be optimized to achieve the best bound. As all terms in the expression can be evaluated empirically, we can determine how much of the empirical observation it can explain and how much remains to be understood.

To get a sense for the scale of the different terms, consider the following typical scenario in LLMs. The averaged negative log likelihood loss is measured in nats per token, where a nat is $\log_2 e$ bits. For simplicity, suppose we have vocabulary size $V=50000$ (so $\log V\approx 11$), quantization $b=3$, and $G^2=1/20$, which yields $\mathcal C \approx 1/9$. $\Sigma$ varies with model scale but is of scale $1/10$, and the quantization gap is around $1/10$. Evaluating these terms, we see that 
$R_{sq}-\hat{R}_h \le 11/9 + 1/30 +1.4/3+1/10 \approx 1.8 \text{ nats per token}$, and we see that the random guess and smoothing terms contribute most to the size of the bound. For perspective, the empirical risk $\hat{R}_h$ itself is around $2$ nats per token and the boundary between vacuous and nonvacuous bounds is at $\log V \approx 11$ nats per token. 

\section{Empirical Evaluation}

As \autoref{eq:full_theorem} is fully empirical, we simply need to empirically evaluate the loss variation term $\Sigma$ along with the quantization gap and we can evaluate the generalization bound. We compute these quantities on the given Pythia checkpoints on the Pile dataset on which they were trained and quantized using GPTQ \citep{frantar2023gptq} to $b=4$ bits per parameter, and we evaluate the bounds with failure probability $\delta=0.01$. The results are shown in \autoref{fig:empirical_scaling} for the Chinchilla compute-optimal checkpoints within the Pythia model family.

We compute $\Sigma$ with $K$ given by $1000$ equally spaced points between $[0,1]$, excluding the endpoints. We estimate the risk and loss variation on an IID subsample from the collection of token-context pairs in the training dataset of size $10^4$ and bound the difference from the full training set evaluation and the $10^4$ sized subsample with a simple Hoeffding bound. We note that largest 12B parameter model failed to quantize properly (possibly due to the learning rate drop as it was the only checkpoint taken at the end of training) and so we removed it from the evaluation. The two smallest models ($70M$ and $160M$ parameters), where the compute optimal training duration is early into the training run and thus sparsely sampled with checkpoints, consistently have too small a value of $N/D$, biasing these two initial data points towards lower values. For these two points, we compute the relevant quantities $D,\Sigma, \hat{R}_h, \hat{R}_q, \Sigma$ by linearly interpolating using the parameter $N/D$ with the target value of $G^2$. 

\begin{figure}[t!]
    \centering
    \begin{subfigure}[b]{0.33\textwidth}
        \centering
        \includegraphics[width=\textwidth]{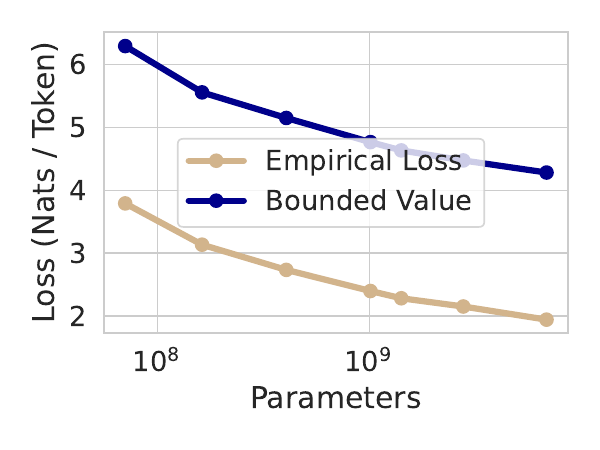}
    \end{subfigure}
    \hfill
    \begin{subfigure}[b]{0.33\textwidth}
        \centering
        \includegraphics[width=\textwidth]{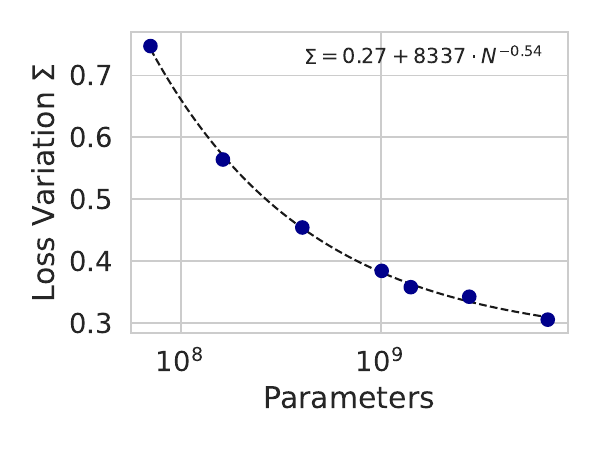}
    \end{subfigure}%
    \hfill
    \begin{subfigure}[b]{0.33\textwidth}
        \centering
        \includegraphics[width=\textwidth]{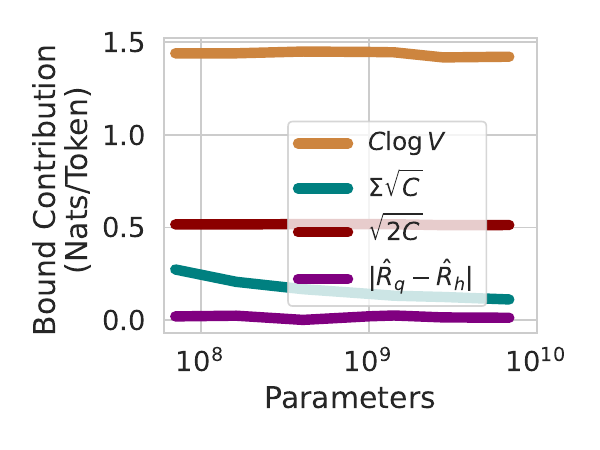}
    \end{subfigure}%
    
    \caption{\textbf{Left:} A direct comparison of our evaluated generalization bound, and the empirical loss as a function of model scale. As the model is scaled up, our bound improves just like the empirical loss. \textbf{Center:} Loss variation $\Sigma$ entering into the generalization bound. As the loss deviation decreases, so does the largest term in our bound. \textbf{Right:} Comparison of the relative scale of the contributions to \autoref{eq:full_theorem}. Here we use a fixed $4$ bit quantization of the parameters.}
    \label{fig:empirical_scaling}
\end{figure}

In \autoref{fig:empirical_scaling}, we can observe several points. In \autoref{fig:empirical_scaling} (center), we see that the loss variation decreases with model size as  $0.27 + 8337 N^{-0.54}$, approximately $1/\sqrt{N}$ with a constant offset. As additional compute is spent in model training, predictions narrow and the loss variation decreases. Like with the irreducible error $E$, it converges to a minimum value presumably related to the varentropy of the distribution.
In \autoref{fig:empirical_scaling} (right), we break down the individual contributions to the generalization bound. The quantization error and loss variance contribute a fraction of a nat per token and decrease with scale, while the $\mathcal C\log V$ term and smoothing terms contribute the majority to the bounds and do not decrease with scale. \autoref{fig:empirical_scaling} (left) shows the comparison with the bound value $R_{sq}$ and the empirical risk $\hat{R}_h$. The fact that the quantization gap at a fixed number of bits decreases with model size has been observed to a much greater extent in other work \citep{chee2024quip, tseng2024quip} with more advanced quantization methods and with fewer bits per parameter. This property suggests that if $b$ were able to be freely optimized in the bound, the complexity $\mathcal C$ would actually decrease with model size, an idea which we discuss further in the section.

\section{Compressibility and the Sublinear Information Growth in LLMs}

While not obvious from efficient quantization algorithms like GPTQ \citep{frantar2023gptq}, there are good reasons to believe that the model complexity term $L(h)/D$ decreases with model scale on the compute-optimal frontier.

\subsection{Quantizability from the Hessian Spectrum and QuIP}

So far we have split the compressed size of the model $L(h)$ featured in the complexity term into the number of parameters $N$ times the number of bits per parameter used in the quantization $b$: $L(h) \le  bN\log 2$. In this splitting, increased compressibility of a model shows up in terms of requiring a smaller number of bits $b$ to achieve a given quantization gap $\hat R_q-\hat R_h$. In \autoref{app:quip}, we provide a theoretical argument using the QuIP quantization framework \citep{chee2024quip} for why we should expect that larger models can be more easily quantized. If the Hessian around the solution weights is positive semi-definite (PSD) and the spectrum decays sufficiently rapidly, then we should expect the quantization error to decrease with model size. In \autoref{app:quip-empirics}, we investigate the Hessian spectrum empirically finding that it indeed decays sufficiently quickly (though not always PSD). Unfortunately, the version of QuIP needed to construct this argument cannot be run in practice due to the impractically large computational constraints. Empirically it has been observed that practical quantization algorithms also reveal that larger models are more quantizable \citep{tseng2024quip}, though the effect is not very pronounced with the GPTQ algorithm we use here.

Alternatively, we present a more abstract information-theoretic argument to provide evidence for the fact that $L(h)/D$ decreases with model scale even if we do not have an explicit compression scheme to achieve it.

\subsection{Information Accumulation in LLMs}\label{subsec:information_transfer}

At initialization, the information content in a large neural network is extremely small, requiring only the model architecture and a random seed to be fully specified. As training progresses, information transfers incrementally from the dataset to the model weights. This transfer can be quantified using prequential coding \citep{rissanen1984universal,dawid1984present} and algorithmic information theory.

Let $K(X)$ be the (prefix) Kolmogorov complexity of the dataset $X$ (the length of a shortest self-delimiting program producing $X$). From the symmetry of information property, $K(X,h) = K(h) + K(X|h) +c$, where $c$ is a small constant. Rearranging, $K(h) = K(X,h)-K(X|h)-c$ measures the information content in $h$ as the difference between the size of the smallest program that codes $X$ and $h$ jointly and the smallest program that codes $X$ given $h$. As described in \citet{blier2018description}, \citet{voita2020information}, and more specifically in \citet{zhang2020measuring}, one can use prequential coding to provide an estimate for an upper bound on $K(X,h)-K(X|h)$.

 A prequential code \citep{rissanen1984universal,dawid1984present} provides a means to code the tuple of data and probability model $(X,h)$ by using codes derived from intermediate snapshots of the model as it processes and updates on each successive data point in $X$ during training. One considers the sender and receiver each to have an identical copy of the model at initialization $h_0$ (along with the randomness seed if randomness is used in the training algorithm). The initialized probability model $h_0$ can be used to encode data in its domain  with arithmetic coding (or any entropy stream code) using $-\log_2 p_{h_0}(X_1)$ bits, which can be decoded by the receiver using $h_0$. With the first data point $X_1$ transmitted, both the sender and receiver train on this data point yielding identical models $h_1$. From here the process can be repeated coding the subsequent data point coded using $h_1$ and so forth until the entire dataset has been transmitted. Using an entropy code for the transmission, the entire code for the transmission need not be greater than $-\sum_{k=1}^D \log_2 p_{h_{k-1}}(X_k|X_{<k})$ bits, the area under the loss curve. With this transmission, the receiver can reconstruct the entire dataset $X=(X_1,X_2,\dots X_D)$ and the sequence of models produced during training $[h_1,h_2,\dots,h_D]$. While this prequential code is not a prefix-free code, it can be converted into one with logarithmic extra bits (see \autoref{app:generalization-bound}), or merely a small constant extra bits if the vocabulary size $V$ and dataset size $D$ are prespecified, which we will assume from now on. With the prequential code, $K(X,h)$ can be upper bounded for a generative model $h$ in terms of the area under the loss curve --- regardless of the number of parameters in $h$ or the number of bits needed in a more direct coding scheme. Instead, the information in the model is determined by the information in the data.

While not a strict lower bound, $K(X|h)$ can be estimated from entropy coding of the data using the generative model as suggested in \citet{zhang2020measuring}. With this strategy, the codelength for $X$ given $h$ is $-\sum_{k=1}^D \log_2 p_{h_D}(X_k|X_{<k})$, the loss for the final model. Assembling the two together, up to small constant factors,
\begin{equation}
    K(h)\log(2) \le \sum_{k=1}^D \big[ R_{h_{k-1}}(X_k|X_{<k}) - R_{h_D}(X_k|X_{<k}) \big].
\end{equation}
We evaluate this expression numerically for the Pythia models; however, we can also gain some insight by examining the asymptotic scaling of this quantity with the size of the dataset. For a reasonable approximation for the loss along the training trajectory, one can use the Chinchilla scaling law itself $R(N,D) = E+AN^{-\alpha}+BD^{-\beta}$, which well approximates the loss when the model approaches the compute optimal frontier. Noting that $\frac{1}{D}\sum_{k=1}^D f(k) \to \frac{1}{D}\int_{1}^D f(x) dx$ as $D\to \infty$,
\begin{equation}
    K(h)\log(2) \le \bigg(\sum_{k=1}^D R(N,k)\bigg) - D R(N,D) \rightarrow \frac{\beta}{1-\beta} D^{1-\beta} = O(D^{1-\beta}).
\end{equation}
We thus obtain an insightful result: the information content in the model grows sublinearly with training dataset size, with a coefficient depending on the scaling. 

As shown in \autoref{fig:prequential_scaling} (left), using the empirical loss curves to evaluate $\sum_{k=1}^D \big[ R_{h_{k-1}}(X_k|X_{<k}) - R_h(X_k|X_{<k})\big]$, and fitting the results to a power law, we get $6\times 10^5\cdot N^{0.5\pm0.1} \propto D^{0.5}$. In contrast, from the asymptotics using the scaling law to approximate the loss \citep{besiroglu2024chinchilla} evaluating $\beta=0.37$ yields $D^{1-\alpha}=D^{0.63}$ which is not far off. While the empirical values for the upper bound lie above the straightforward value one gets from quantization and parameter counting $bN$ over the range of Pythia models, the curves predict a crossover point at $\approx 20$B parameter models. 

\subsection{Implications for Generalization Bounds}
\begin{figure}[t!]
    \centering
    \begin{subfigure}[b]{0.33\textwidth}
        \centering
        \includegraphics[width=\textwidth]{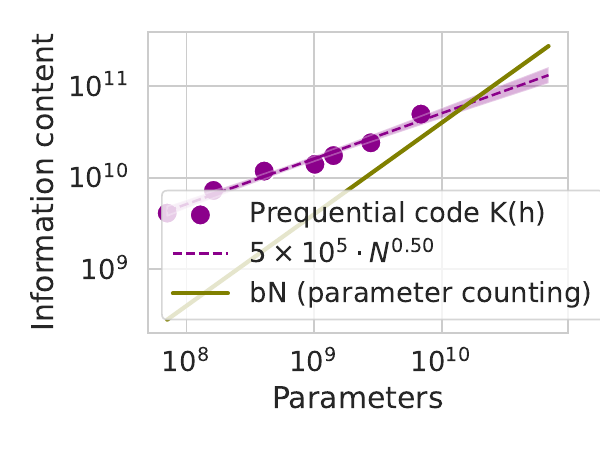}
    \end{subfigure}%
    \hfill
    \begin{subfigure}[b]{0.33\textwidth}
        \centering
        \includegraphics[width=\textwidth]{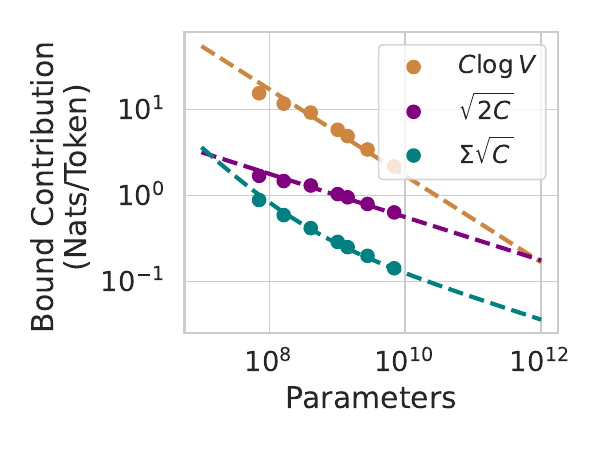}
    \end{subfigure}%
    \hfill
    \begin{subfigure}[b]{0.33\textwidth}
        \centering
        \includegraphics[width=\textwidth]{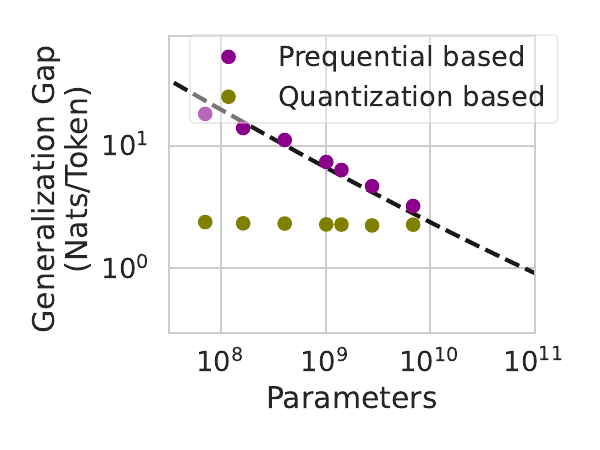}
    \end{subfigure}
    \caption{\textbf{Left:} Information content contained in the model as upper bounded by $K(h)$ from the information transfer prequential coding approach vs parameter counting and quantization. Fitting a power law to the prequential $K(h)$ yields $6\times 10^5\cdot N^{0.5\pm0.1}$. While parameter counting gives a better upper bound over the range of Pythia models, the sublinear scaling of the prequential bound means that it overtakes it eventually, somewhere around $30$B sized models. \textbf{Center:} The contributions of the various terms to our generalization bounds when using prequential coding complexity, along with their power law fits. \textbf{Right:} Comparison of generalization bounds produced by the prequential vs quantization based approaches. While the prequential bounds are worse, they follow a power law and improve substantially with scale.
    }
    \label{fig:prequential_scaling}
\end{figure}

If we apply this observation to upper bound the complexity featured in our generalization bounds (\autoref{eq:full_theorem}) $\mathcal{C} = \frac{L(h)+\log |K|/\delta}{D}=K(h)\log(2)/D +\frac{\log|K|/\delta}{D} = O(D^{-\beta})$, we see that the complexity will actually decrease with the size of the dataset even as the ratio with parameters is held constant. With this scaling, we can derive a version of the generalization bound in \autoref{eq:full_theorem} without needing to consider quantization or the number of explicit parameters in the model, provided that the Chinchilla scaling law holds. 

We evaluate the non-asymptotic generalization bound of \autoref{eq:full_theorem} but using the complexity derived from empirical prequential coding bound in \autoref{fig:prequential_scaling} (right) and break it down into the scaling of the individual terms (center), with the $\Sigma$ term scaling law extrapolated from the fit in \autoref{fig:empirical_scaling}. Like before, the $\mathcal C\log V$ term dominates; however, the $\sqrt{2\mathcal C}$ smoothing term threatens to overtake it with very large model sizes. We can see that the bounds based on the prequential coding are worse than their quantization counterparts; however, the bounds improve with scale and can be extrapolated with scaling laws. Considering only the asymptotics, the generalization gap of \autoref{eq:full_theorem} will be dominated by the scaling of the smoothing term $\sqrt{2\mathcal C}$:
$R_{s}- \hat{R}_h = O(D^{-\beta/2})$. To speculate, it seems likely that with a more sophisticated approach for dealing with the unbounded loss, the $\sqrt{C}=O(D^{-\beta/2})$ could be removed, letting the $O(D^{-\beta})$ shine through. If that were the case, then the tokenwise generalization gap could indeed be hidden within the $D^{-\beta}$ of the original scaling law, and we leave such investigations to future work.

\section{Additional Related Work}
\paragraph{Generalization Bounds.} Historically, generalization bounds for neural networks have been limited by their large parameter count, though significant progress has been made in explaining generalization behavior \citep{dziugaite2017computing, zhou2018non, arora2018stronger, lotfi2022pac}, with PAC-Bayes providing a convenient unifying framework \citep{catoni2007pac}. \citet{lotfi2024nonvacuous} constructed the first non-vacuous generalization bounds for LLMs using prediction smoothing and extreme compression with subspace LoRA \citep{hu2021lora}.

While \citet{lotfi2024nonvacuous} focused on document-level bounds, \citet{lotfi2024token} used Azuma's inequality for token-level martingale-based bounds. We adopt this approach but improve the complexity from $O(\sqrt{\mathcal C})$ to $O(\mathcal C)$ through loss variation. Related work has constrained context learning in LLMs \citep{li2023transformers} and explored generalization in vision-language models \citep{akinwande2023understanding}.

\citet{chugg2023unified} developed generalization bounds for both IID and martingale settings generalizing many previous results; however, these bounds are not fully empirical and thus can't be applied in the setting we require. Closest to our work, \citet{maurer2009empirical} derived a fully empirical Bernstein inequality, but their technique doesn't extend to non-IID martingale settings.

\paragraph{Post-Training Quantization.} For hardware efficiency, significant research has explored reducing model precision post-training while preserving performance \citep{Hassibi1993OptimalBS, Hubara2021AccuratePT, Yao2022ZeroQuantEA, Dettmers2022LLMint88M}. Empirically, 3-4 bits provide a reasonable compression-performance tradeoff, with recent work pushing to 1.58 bits per parameter \citep{ma2024era}, and even binary networks showing promise \citep{liu2024bitdelta}.

For this work, we use GPTQ \citep{frantar2023gptq} with 4-bit quantization, which achieves efficient extreme quantization through iterative weight column rounding. Alternative approaches include QuIP \citep{chee2024quip}, which uses \emph{incoherence} in approximate Hessian estimation to suppress outliers, and its improved variant QuIP\# \citep{tseng2024quip}. We leverage the analysis from these two papers in \autoref{app:quip} to shed some light on how the quantizability scales with model size due to the spectrum of the Hessian.

\section{Discussion}
We have provided generalization theory to better explain why large language models trained in the compute-optimal regime generalize, with particular attention on how generalization changes with scale. For the term that contributes the most to the generalization bound, we are able to improve the scaling over previous work from $\sqrt{\mathcal C}\log V$ to $\mathcal C \log V $, while remaining fully empirical. We explore two approaches for constraining model complexity in the generalization bounds, directly via quantization and parameter counting, and indirectly, via information transfer as quantified by prequential coding. While the quantization approach yields tighter bounds at Pythia model scale, the information transfer approach reveals that information in the model grows at a rate that is sublinear in the size of the dataset, and consequently the generalization gap must also decrease with scale.

While we believe that these insights help advance understanding, there are a number of limitations of our approach and many questions that remain unaddressed. As previously mentioned, the $\sqrt{2\mathcal C}$ smoothing term seems pessimistic and could likely be improved with a different approach. Additionally, while the information transfer argument provides evidence that the complexity of a model is low based on the training curve, it falls short of explaining \emph{why} the complexity is low. In principle, the training curve could look different if it did not follow the Chinchilla power law scaling, leading to a different information transfer rate. Similarly, the Hessian based argument explains why larger models are more quantizable given the scaling of spectrum of the Hessian, but why the Hessian spectrum has this empirical behavior remains unexplained. Furthermore, it seems likely that the $1/\sqrt{N}$ in the loss variation term could be explained theoretically.

Even more broadly, our generalization bounds constrain only the token-wise generalization gap. While it is intuitive that generalizing well on next token prediction over the training contexts should imply generalization on the full sequences, this gap remains to be more fully understood. Similarly, constraining generalization on the NLL objective over data drawn from the natural distribution may be less pertinent. Instead, it may be more relevant to constrain the gap between the quality metrics of model generations and natural data. Further removed, there is the question of why the training loss scales in the way that it does --- and how that scaling relate to approximation theory and the architecture of the model. Though many questions remain, we hope that the techniques here can yield generalizable insights for tackling this broader set of problems.

\bibliography{iclr/iclr2025_conference}

\begin{thebibliography}{65}
\providecommand{\natexlab}[1]{#1}
\providecommand{\url}[1]{\texttt{#1}}
\expandafter\ifx\csname urlstyle\endcsname\relax
  \providecommand{\doi}[1]{doi: #1}\else
  \providecommand{\doi}{doi: \begingroup \urlstyle{rm}\Url}\fi

\bibitem[Akinwande et~al.(2023)Akinwande, Jiang, Sam, and
  Kolter]{akinwande2023understanding}
Victor Akinwande, Yiding Jiang, Dylan Sam, and J~Zico Kolter.
\newblock Understanding prompt engineering may not require rethinking
  generalization.
\newblock \emph{arXiv preprint arXiv:2310.03957}, 2023.

\bibitem[Arora et~al.(2018)Arora, Ge, Neyshabur, and Zhang]{arora2018stronger}
Sanjeev Arora, Rong Ge, Behnam Neyshabur, and Yi~Zhang.
\newblock Stronger generalization bounds for deep nets via a compression
  approach.
\newblock In \emph{International Conference on Machine Learning}, pp.\
  254--263. PMLR, 2018.

\bibitem[Bahri et~al.(2021)Bahri, Dyer, Kaplan, Lee, and
  Sharma]{Bahri2021ExplainingNS}
Yasaman Bahri, Ethan Dyer, Jared Kaplan, Jaehoon Lee, and Utkarsh Sharma.
\newblock Explaining neural scaling laws.
\newblock \emph{Proceedings of the National Academy of Sciences of the United
  States of America}, 121, 2021.

\bibitem[Besiroglu et~al.(2024)Besiroglu, Erdil, Barnett, and
  You]{besiroglu2024chinchilla}
Tamay Besiroglu, Ege Erdil, Matthew Barnett, and Josh You.
\newblock Chinchilla scaling: A replication attempt.
\newblock \emph{arXiv preprint arXiv:2404.10102}, 2024.

\bibitem[Biderman et~al.(2023)Biderman, Schoelkopf, Anthony, Bradley,
  O’Brien, Hallahan, Khan, Purohit, Prashanth, Raff,
  et~al.]{biderman2023pythia}
Stella Biderman, Hailey Schoelkopf, Quentin~Gregory Anthony, Herbie Bradley,
  Kyle O’Brien, Eric Hallahan, Mohammad~Aflah Khan, Shivanshu Purohit,
  USVSN~Sai Prashanth, Edward Raff, et~al.
\newblock Pythia: A suite for analyzing large language models across training
  and scaling.
\newblock In \emph{International Conference on Machine Learning}, pp.\
  2397--2430. PMLR, 2023.

\bibitem[Blier \& Ollivier(2018)Blier and Ollivier]{blier2018description}
L{\'e}onard Blier and Yann Ollivier.
\newblock The description length of deep learning models.
\newblock \emph{Advances in Neural Information Processing Systems}, 31, 2018.

\bibitem[Bottou \& Bousquet(2007)Bottou and Bousquet]{bottou2007tradeoffs}
L{\'e}on Bottou and Olivier Bousquet.
\newblock The tradeoffs of large scale learning.
\newblock \emph{Advances in neural information processing systems}, 20, 2007.

\bibitem[Brown \& Ali(2024)Brown and Ali]{brown2024bias}
Gavin Brown and Riccardo Ali.
\newblock Bias/variance is not the same as approximation/estimation.
\newblock \emph{Transactions on Machine Learning Research}, 2024.

\bibitem[Brown(2020)]{brown2020language}
Tom~B Brown.
\newblock Language models are few-shot learners.
\newblock \emph{arXiv preprint arXiv:2005.14165}, 2020.

\bibitem[Bun et~al.(2017)Bun, Bouchaud, and Potters]{bun2017cleaning}
Jo{\"e}l Bun, Jean-Philippe Bouchaud, and Marc Potters.
\newblock Cleaning large correlation matrices: tools from random matrix theory.
\newblock \emph{Physics Reports}, 666:\penalty0 1--109, 2017.

\bibitem[Catoni(2007)]{catoni2007pac}
Olivier Catoni.
\newblock Pac-bayesian supervised classification: the thermodynamics of
  statistical learning.
\newblock \emph{arXiv preprint arXiv:0712.0248}, 2007.

\bibitem[Chee et~al.(2024)Chee, Cai, Kuleshov, and De~Sa]{chee2024quip}
Jerry Chee, Yaohui Cai, Volodymyr Kuleshov, and Christopher~M De~Sa.
\newblock Quip: 2-bit quantization of large language models with guarantees.
\newblock \emph{Advances in Neural Information Processing Systems}, 36, 2024.

\bibitem[Chugg et~al.(2023)Chugg, Wang, and Ramdas]{chugg2023unified}
Ben Chugg, Hongjian Wang, and Aaditya Ramdas.
\newblock A unified recipe for deriving (time-uniform) pac-bayes bounds.
\newblock \emph{Journal of Machine Learning Research}, 24\penalty0
  (372):\penalty0 1--61, 2023.

\bibitem[Cybenko(1989)]{cybenko1989}
G.~Cybenko.
\newblock Approximation by superpositions of a sigmoidal function.
\newblock \emph{Mathematics of Control, Signals and Systems}, 2\penalty0
  (4):\penalty0 303--314, 1989.
\newblock \doi{10.1007/BF02551274}.
\newblock URL \url{https://doi.org/10.1007/BF02551274}.

\bibitem[Dawid(1984)]{dawid1984present}
A~Philip Dawid.
\newblock Present position and potential developments: Some personal views
  statistical theory the prequential approach.
\newblock \emph{Journal of the Royal Statistical Society: Series A (General)},
  147\penalty0 (2):\penalty0 278--290, 1984.

\bibitem[Dettmers et~al.(2022)Dettmers, Lewis, Belkada, and
  Zettlemoyer]{Dettmers2022LLMint88M}
Tim Dettmers, Mike Lewis, Younes Belkada, and Luke Zettlemoyer.
\newblock Llm.int8(): 8-bit matrix multiplication for transformers at scale.
\newblock \emph{ArXiv}, abs/2208.07339, 2022.

\bibitem[Dong et~al.(2017)Dong, Eriksson, Nickisch, Bindel, and
  Wilson]{dong2017scalable}
Kun Dong, David Eriksson, Hannes Nickisch, David Bindel, and Andrew~G Wilson.
\newblock Scalable log determinants for gaussian process kernel learning.
\newblock \emph{Advances in Neural Information Processing Systems}, 30, 2017.

\bibitem[Dubey et~al.(2024)Dubey, Jauhri, Pandey, et~al.]{Dubey2024TheL3}
Abhimanyu Dubey, Abhinav Jauhri, Abhinav Pandey, et~al.
\newblock The llama 3 herd of models.
\newblock \emph{ArXiv}, abs/2407.21783, 2024.

\bibitem[Dziugaite \& Roy(2017)Dziugaite and Roy]{dziugaite2017computing}
Gintare~Karolina Dziugaite and Daniel~M Roy.
\newblock Computing nonvacuous generalization bounds for deep (stochastic)
  neural networks with many more parameters than training data.
\newblock \emph{arXiv preprint arXiv:1703.11008}, 2017.

\bibitem[Fitzsimons et~al.(2017)Fitzsimons, Granziol, Cutajar, Osborne,
  Filippone, and Roberts]{fitzsimons2017entropic}
Jack Fitzsimons, Diego Granziol, Kurt Cutajar, Michael Osborne, Maurizio
  Filippone, and Stephen Roberts.
\newblock Entropic trace estimates for log determinants.
\newblock In \emph{Machine Learning and Knowledge Discovery in Databases:
  European Conference, ECML PKDD 2017, Skopje, Macedonia, September 18--22,
  2017, Proceedings, Part I 10}, pp.\  323--338. Springer, 2017.

\bibitem[Frantar et~al.(2023)Frantar, Ashkboos, Hoefler, and
  Alistarh]{frantar2023gptq}
Elias Frantar, Saleh Ashkboos, Torsten Hoefler, and Dan Alistarh.
\newblock Gptq: Accurate post-training quantization for generative pre-trained
  transformers, 2023.

\bibitem[Freedman(1975)]{Freedman1975OnTP}
David~A. Freedman.
\newblock On tail probabilities for martingales.
\newblock 1975.

\bibitem[Gao et~al.(2020)Gao, Biderman, Black, Golding, Hoppe, Foster, Phang,
  He, Thite, Nabeshima, et~al.]{gao2020pile}
Leo Gao, Stella Biderman, Sid Black, Laurence Golding, Travis Hoppe, Charles
  Foster, Jason Phang, Horace He, Anish Thite, Noa Nabeshima, et~al.
\newblock The pile: An 800gb dataset of diverse text for language modeling.
\newblock \emph{arXiv preprint arXiv:2101.00027}, 2020.

\bibitem[Ghorbani et~al.(2019)Ghorbani, Krishnan, and
  Xiao]{ghorbani2019investigation}
Behrooz Ghorbani, Shankar Krishnan, and Ying Xiao.
\newblock An investigation into neural net optimization via hessian eigenvalue
  density.
\newblock In \emph{International Conference on Machine Learning}, pp.\
  2232--2241. PMLR, 2019.

\bibitem[Granziol et~al.(2018)Granziol, Wan, Garipov, Vetrov, and
  Roberts]{granziol2018mlrg}
Diego Granziol, Xingchen Wan, Timur Garipov, Dmitry Vetrov, and Stephen
  Roberts.
\newblock Mlrg deep curvature: An open-source package to analyse and visualise
  neural network curvature and loss surface.
\newblock \emph{stat}, 2018.

\bibitem[Granziol et~al.(2019)Granziol, Garipov, Zohren, Vetrov, Roberts, and
  Wilson]{granziol2019deep}
Diego Granziol, Timur Garipov, Stefan Zohren, Dmitry Vetrov, Stephen Roberts,
  and Andrew~Gordon Wilson.
\newblock The deep learning limit: are negative neural network eigenvalues just
  noise?
\newblock In \emph{ICML 2019 workshop on theoretical physics for deep
  learning}, 2019.

\bibitem[Granziol et~al.(2022)Granziol, Zohren, and
  Roberts]{granziol2022learning}
Diego Granziol, Stefan Zohren, and Stephen Roberts.
\newblock Learning rates as a function of batch size: A random matrix theory
  approach to neural network training.
\newblock \emph{Journal of Machine Learning Research}, 23\penalty0
  (173):\penalty0 1--65, 2022.

\bibitem[Gruver et~al.(2023)Gruver, Finzi, Qiu, and Wilson]{gruver2023timeshot}
Nate Gruver, Marc Finzi, Shikai Qiu, and Andrew~G Wilson.
\newblock Large language models are zero-shot time series forecasters.
\newblock In A.~Oh, T.~Naumann, A.~Globerson, K.~Saenko, M.~Hardt, and
  S.~Levine (eds.), \emph{Advances in Neural Information Processing Systems},
  volume~36, pp.\  19622--19635. Curran Associates, Inc., 2023.
\newblock URL
  \url{https://proceedings.neurips.cc/paper_files/paper/2023/file/3eb7ca52e8207697361b2c0fb3926511-Paper-Conference.pdf}.

\bibitem[Hassibi et~al.(1993)Hassibi, Stork, and Wolff]{Hassibi1993OptimalBS}
Babak Hassibi, David~G. Stork, and Gregory~J. Wolff.
\newblock Optimal brain surgeon and general network pruning.
\newblock \emph{IEEE International Conference on Neural Networks}, pp.\
  293--299 vol.1, 1993.

\bibitem[Henighan et~al.(2020)Henighan, Kaplan, Katz, Chen, Hesse, Jackson,
  Jun, Brown, Dhariwal, Gray, et~al.]{henighan2020scaling}
Tom Henighan, Jared Kaplan, Mor Katz, Mark Chen, Christopher Hesse, Jacob
  Jackson, Heewoo Jun, Tom~B Brown, Prafulla Dhariwal, Scott Gray, et~al.
\newblock Scaling laws for autoregressive generative modeling.
\newblock \emph{arXiv preprint arXiv:2010.14701}, 2020.

\bibitem[Hoffmann et~al.(2022)Hoffmann, Borgeaud, Mensch, Buchatskaya, Cai,
  Rutherford, de~Las~Casas, Hendricks, Welbl, Clark, Hennigan, Noland,
  Millican, van~den Driessche, Damoc, Guy, Osindero, Simonyan, Elsen, Rae,
  Vinyals, and Sifre]{Hoffmann2022TrainingCL}
Jordan Hoffmann, Sebastian Borgeaud, Arthur Mensch, Elena Buchatskaya, Trevor
  Cai, Eliza Rutherford, Diego de~Las~Casas, Lisa~Anne Hendricks, Johannes
  Welbl, Aidan Clark, Tom Hennigan, Eric Noland, Katie Millican, George van~den
  Driessche, Bogdan Damoc, Aurelia Guy, Simon Osindero, Karen Simonyan, Erich
  Elsen, Jack~W. Rae, Oriol Vinyals, and L.~Sifre.
\newblock Training compute-optimal large language models.
\newblock \emph{ArXiv}, abs/2203.15556, 2022.

\bibitem[Hu et~al.(2021)Hu, Shen, Wallis, Allen-Zhu, Li, Wang, Wang, and
  Chen]{hu2021lora}
Edward~J. Hu, Yelong Shen, Phillip Wallis, Zeyuan Allen-Zhu, Yuanzhi Li, Shean
  Wang, Lu~Wang, and Weizhu Chen.
\newblock Lora: Low-rank adaptation of large language models.
\newblock \emph{arXiv preprint arXiv:2106.09685}, 2021.

\bibitem[Hubara et~al.(2021)Hubara, Nahshan, Hanani, Banner, and
  Soudry]{Hubara2021AccuratePT}
Itay Hubara, Yury Nahshan, Yair Hanani, Ron Banner, and Daniel Soudry.
\newblock Accurate post training quantization with small calibration sets.
\newblock In \emph{International Conference on Machine Learning}, 2021.

\bibitem[Kaplan et~al.(2020)Kaplan, McCandlish, Henighan, Brown, Chess, Child,
  Gray, Radford, Wu, and Amodei]{Kaplan2020ScalingLF}
Jared Kaplan, Sam McCandlish, Tom Henighan, Tom~B. Brown, Benjamin Chess, Rewon
  Child, Scott Gray, Alec Radford, Jeff Wu, and Dario Amodei.
\newblock Scaling laws for neural language models.
\newblock \emph{ArXiv}, abs/2001.08361, 2020.

\bibitem[Kraft(1949)]{kraft1949device}
Leon~Gordon Kraft.
\newblock \emph{A device for quantizing, grouping, and coding
  amplitude-modulated pulses}.
\newblock PhD thesis, Massachusetts Institute of Technology, 1949.

\bibitem[Li et~al.(2023)Li, Ildiz, Papailiopoulos, and
  Oymak]{li2023transformers}
Yingcong Li, Muhammed~Emrullah Ildiz, Dimitris Papailiopoulos, and Samet Oymak.
\newblock Transformers as algorithms: Generalization and stability in
  in-context learning.
\newblock In \emph{International Conference on Machine Learning}, pp.\
  19565--19594. PMLR, 2023.

\bibitem[Liu et~al.(2024)Liu, Xiao, Li, Lee, Han, Dao, and
  Cai]{liu2024bitdelta}
James Liu, Guangxuan Xiao, Kai Li, Jason~D. Lee, Song Han, Tri Dao, and Tianle
  Cai.
\newblock Bitdelta: Your fine-tune may only be worth one bit, 2024.

\bibitem[Lotfi et~al.(2022)Lotfi, Finzi, Kapoor, Potapczynski, Goldblum, and
  Wilson]{lotfi2022pac}
Sanae Lotfi, Marc Finzi, Sanyam Kapoor, Andres Potapczynski, Micah Goldblum,
  and Andrew~G Wilson.
\newblock Pac-bayes compression bounds so tight that they can explain
  generalization.
\newblock \emph{Advances in Neural Information Processing Systems},
  35:\penalty0 31459--31473, 2022.

\bibitem[Lotfi et~al.(2024{\natexlab{a}})Lotfi, Finzi, Kuang, Rudner, Goldblum,
  and Wilson]{lotfi2024nonvacuous}
Sanae Lotfi, Marc Finzi, Yilun Kuang, Tim G.~J. Rudner, Micah Goldblum, and
  Andrew~Gordon Wilson.
\newblock Non-vacuous generalization bounds for large language models,
  2024{\natexlab{a}}.

\bibitem[Lotfi et~al.(2024{\natexlab{b}})Lotfi, Kuang, Amos, Goldblum, Finzi,
  and Wilson]{lotfi2024token}
Sanae Lotfi, Yilun Kuang, Brandon Amos, Micah Goldblum, Marc Finzi, and
  Andrew~Gordon Wilson.
\newblock Unlocking tokens as data points for generalization bounds on larger
  language models.
\newblock \emph{ArXiv}, abs/2407.18158, 2024{\natexlab{b}}.

\bibitem[Lotfi et~al.(2024{\natexlab{c}})Lotfi, Kuang, Amos, Goldblum, Finzi,
  and Wilson]{lotfi2024unlocking}
Sanae Lotfi, Yilun Kuang, Brandon Amos, Micah Goldblum, Marc Finzi, and
  Andrew~Gordon Wilson.
\newblock Unlocking tokens as data points for generalization bounds on larger
  language models.
\newblock \emph{arXiv preprint arXiv:2407.18158}, 2024{\natexlab{c}}.

\bibitem[Ma et~al.(2024)Ma, Wang, Ma, Wang, Wang, Huang, Dong, Wang, Xue, and
  Wei]{ma2024era}
Shuming Ma, Hongyu Wang, Lingxiao Ma, Lei Wang, Wenhui Wang, Shaohan Huang,
  Li~Dong, Ruiping Wang, Jilong Xue, and Furu Wei.
\newblock The era of 1-bit llms: All large language models are in 1.58 bits,
  2024.

\bibitem[Maurer \& Pontil(2009)Maurer and Pontil]{maurer2009empirical}
Andreas Maurer and Massimiliano Pontil.
\newblock Empirical bernstein bounds and sample variance penalization.
\newblock \emph{arXiv preprint arXiv:0907.3740}, 2009.

\bibitem[McMillan(1956)]{mcmillan1956two}
Brockway McMillan.
\newblock Two inequalities implied by unique decipherability.
\newblock \emph{IRE Transactions on Information Theory}, 2\penalty0
  (4):\penalty0 115--116, 1956.

\bibitem[Meurant \& Strako{\v{s}}(2006)Meurant and
  Strako{\v{s}}]{meurant2006lanczos}
G{\'e}rard Meurant and Zden{\v{e}}k Strako{\v{s}}.
\newblock The lanczos and conjugate gradient algorithms in finite precision
  arithmetic.
\newblock \emph{Acta Numerica}, 15:\penalty0 471--542, 2006.

\bibitem[Muennighoff et~al.(2024)Muennighoff, Rush, Barak, Le~Scao, Tazi,
  Piktus, Pyysalo, Wolf, and Raffel]{muennighoff2024scaling}
Niklas Muennighoff, Alexander Rush, Boaz Barak, Teven Le~Scao, Nouamane Tazi,
  Aleksandra Piktus, Sampo Pyysalo, Thomas Wolf, and Colin~A Raffel.
\newblock Scaling data-constrained language models.
\newblock \emph{Advances in Neural Information Processing Systems}, 36, 2024.

\bibitem[Nagel et~al.(2020)Nagel, Amjad, Van~Baalen, Louizos, and
  Blankevoort]{nagel2020up}
Markus Nagel, Rana~Ali Amjad, Mart Van~Baalen, Christos Louizos, and Tijmen
  Blankevoort.
\newblock Up or down? adaptive rounding for post-training quantization.
\newblock In \emph{International Conference on Machine Learning}, pp.\
  7197--7206. PMLR, 2020.

\bibitem[OpenAI(2023)]{Achiam2023GPT4TR}
OpenAI.
\newblock Gpt-4 technical report.
\newblock 2023.

\bibitem[Papyan(2019)]{papyan2019measurements}
Vardan Papyan.
\newblock Measurements of three-level hierarchical structure in the outliers in
  the spectrum of deepnet hessians.
\newblock \emph{arXiv preprint arXiv:1901.08244}, 2019.

\bibitem[Patel \& Pavlick(2022)Patel and Pavlick]{Patel2022MappingLM}
Roma Patel and Ellie Pavlick.
\newblock Mapping language models to grounded conceptual spaces.
\newblock In \emph{International Conference on Learning Representations}, 2022.

\bibitem[Pearlmutter(1994)]{pearlmutter1994fast}
Barak~A Pearlmutter.
\newblock Fast exact multiplication by the hessian.
\newblock \emph{Neural computation}, 6\penalty0 (1):\penalty0 147--160, 1994.

\bibitem[Potapczynski et~al.(2023)Potapczynski, Finzi, Pleiss, and
  Wilson]{potapczynski2023cola}
Andres Potapczynski, Marc Finzi, Geoff Pleiss, and Andrew~Gordon Wilson.
\newblock {CoLA: Exploiting Compositional Structure for Automatic and Efficient
  Numerical Linear Algebra}.
\newblock \emph{arXiv preprint arXiv:2309.03060}, 2023.

\bibitem[Rissanen(1984)]{rissanen1984universal}
Jorma Rissanen.
\newblock Universal coding, information, prediction, and estimation.
\newblock \emph{IEEE Transactions on Information theory}, 30\penalty0
  (4):\penalty0 629--636, 1984.

\bibitem[Roosta-Khorasani \& Ascher(2015)Roosta-Khorasani and
  Ascher]{roosta2015improved}
Farbod Roosta-Khorasani and Uri Ascher.
\newblock Improved bounds on sample size for implicit matrix trace estimators.
\newblock \emph{Foundations of Computational Mathematics}, 15\penalty0
  (5):\penalty0 1187--1212, 2015.

\bibitem[Sardana et~al.(2023)Sardana, Portes, Doubov, and
  Frankle]{sardana2023beyond}
Nikhil Sardana, Jacob Portes, Sasha Doubov, and Jonathan Frankle.
\newblock Beyond chinchilla-optimal: Accounting for inference in language model
  scaling laws.
\newblock \emph{arXiv preprint arXiv:2401.00448}, 2023.

\bibitem[Shalev-Shwartz \& Ben-David(2014)Shalev-Shwartz and
  Ben-David]{shalev2014understanding}
Shai Shalev-Shwartz and Shai Ben-David.
\newblock \emph{Understanding machine learning: From theory to algorithms}.
\newblock Cambridge university press, 2014.

\bibitem[Solomonoff(1964)]{solomonoff1964formal}
Ray~J Solomonoff.
\newblock A formal theory of inductive inference. part i.
\newblock \emph{Information and control}, 7\penalty0 (1):\penalty0 1--22, 1964.

\bibitem[Trinh et~al.(2024)Trinh, Wu, Le, He, and Luong]{Trinh2024SolvingOG}
Trieu~H. Trinh, Yuhuai Wu, Quoc~V. Le, He~He, and Thang Luong.
\newblock Solving olympiad geometry without human demonstrations.
\newblock \emph{Nature}, 625:\penalty0 476 -- 482, 2024.

\bibitem[Tseng et~al.(2024)Tseng, Chee, Sun, Kuleshov, and Sa]{tseng2024quip}
Albert Tseng, Jerry Chee, Qingyao Sun, Volodymyr Kuleshov, and Christopher~De
  Sa.
\newblock Quip: Even better llm quantization with hadamard incoherence and
  lattice codebooks, 2024.

\bibitem[Ubaru et~al.(2017)Ubaru, Chen, and Saad]{ubaru2017fast}
Shashanka Ubaru, Jie Chen, and Yousef Saad.
\newblock Fast estimation of tr(f(a)) via stochastic lanczos quadrature.
\newblock \emph{SIAM Journal on Matrix Analysis and Applications}, 38\penalty0
  (4):\penalty0 1075--1099, 2017.

\bibitem[Ville(1939)]{Ville1939}
Jean Ville.
\newblock \emph{Étude critique de la notion de collectif}.
\newblock 1939.
\newblock URL \url{http://eudml.org/doc/192893}.

\bibitem[Voita \& Titov(2020)Voita and Titov]{voita2020information}
Elena Voita and Ivan Titov.
\newblock Information-theoretic probing with minimum description length.
\newblock \emph{arXiv preprint arXiv:2003.12298}, 2020.

\bibitem[Yao et~al.(2022)Yao, Aminabadi, Zhang, Wu, Li, and
  He]{Yao2022ZeroQuantEA}
Zhewei Yao, Reza~Yazdani Aminabadi, Minjia Zhang, Xiaoxia Wu, Conglong Li, and
  Yuxiong He.
\newblock Zeroquant: Efficient and affordable post-training quantization for
  large-scale transformers.
\newblock \emph{ArXiv}, abs/2206.01861, 2022.

\bibitem[Zhang et~al.(2020)Zhang, Li, Dou, and Wu]{zhang2020measuring}
Xiao Zhang, Xingjian Li, Dejing Dou, and Ji~Wu.
\newblock Measuring information transfer in neural networks.
\newblock \emph{arXiv preprint arXiv:2009.07624}, 2020.

\bibitem[Zhou et~al.(2018)Zhou, Veitch, Austern, Adams, and
  Orbanz]{zhou2018non}
Wenda Zhou, Victor Veitch, Morgane Austern, Ryan~P Adams, and Peter Orbanz.
\newblock Non-vacuous generalization bounds at the imagenet scale: a
  pac-bayesian compression approach.
\newblock \emph{arXiv preprint arXiv:1804.05862}, 2018.

\end{thebibliography}
\bibliographystyle{iclr/iclr2025_conference}

\appendix

\onecolumn
\section{Proofs}\label{app:proofs}
\subsection{A fully empirical martingale Freedman concentration inequality.}\label{sec:freedman}
In this section, unless otherwise specified, $\log$ is used to denote the natural logarithm. We start with three technical lemmas.

\begin{lemma}
\label{lemma:expectedlog}
Consider the function $v(a) = a-\log (1+a)$. Let $\mu \in \mathbb{R}$. For any random variable $Z$ with $\E[Z]=0$ that satisfies $Z-\mu>-1$, we have
\[
    \E[\exp(Z - v(Z-\mu))] = (1-\mu)e^\mu \le 1.
\]
\end{lemma}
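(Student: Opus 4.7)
The plan is to evaluate the expectation directly by simplifying the exponent and then to verify the inequality $(1-\mu)e^\mu \le 1$ via one-dimensional calculus. The key observation is that the definition $v(a) = a - \log(1+a)$ is tailored precisely so that $Z - v(Z - \mu)$ collapses to something linear in $Z$ inside the exponential.

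Concretely, I would first write $Z - v(Z - \mu) = Z - (Z - \mu) + \log(1 + Z - \mu) = \mu + \log(1 + Z - \mu)$, using the hypothesis $Z - \mu > -1$ so that the logarithm is well-defined almost surely. Exponentiating gives
\[
\exp\bigl(Z - v(Z - \mu)\bigr) = e^\mu \bigl(1 + Z - \mu\bigr).
\]
Taking expectations, linearity together with $\E[Z] = 0$ yields
\[
\E\bigl[\exp(Z - v(Z - \mu))\bigr] = e^\mu (1 + \E[Z] - \mu) = (1 - \mu) e^\mu,
\]
which is the stated equality.

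For the inequality, set $f(\mu) := (1 - \mu) e^\mu$. Differentiating gives $f'(\mu) = -\mu e^\mu$, so $f$ is increasing on $(-\infty, 0)$ and decreasing on $(0, \infty)$, with a unique maximum at $\mu = 0$ where $f(0) = 1$. Hence $(1-\mu)e^\mu \le 1$ for all $\mu \in \mathbb{R}$, completing the lemma.

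There is no real obstacle here; the only subtlety is ensuring the random variable $\log(1 + Z - \mu)$ is well-defined and integrable, which follows from the assumption $Z - \mu > -1$ (combined with any standard convention handling the measure-zero event of equality, which does not occur under a strict inequality). The rest is a one-line algebraic identity and a standard calculus exercise on $f(\mu)$.
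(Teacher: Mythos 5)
Your proof is correct and follows essentially the same route as the paper: the exponent collapses to $\mu + \log(1+Z-\mu)$ by the definition of $v$, giving $e^\mu\,\E[1+Z-\mu] = (1-\mu)e^\mu$, exactly as in the paper's argument. The only cosmetic difference is that you verify $(1-\mu)e^\mu \le 1$ by differentiating, whereas the paper invokes the standard inequality $1-\mu \le e^{-\mu}$; these are the same fact.
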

\begin{proof}
We see that
\[
    \E[\exp(Z - v(Z-\mu))] = e^\mu\E[\exp(Z-\mu - v(Z-\mu))] = e^\mu \E[1+(Z-\mu)].
\]
As $\E[Z]=0$, we know 
$\E[\exp(Z - v(Z-\mu))] = (1-\mu)e^\mu$. Additionally as $1-\mu \le e^{-\mu}$, we have
$\E[\exp(Z - v(Z-\mu))] \le 1$, as desired.
\end{proof}

\begin{lemma}
\label{lemma:property_of_v}
Consider the function $v(a) = a-\log (1+a)$.
For all $a\in (-1,\infty)$, we have
\[
    v(a) \le a^2/(1+a)
\]
\end{lemma}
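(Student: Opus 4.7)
The plan is to reduce this inequality to an elementary one-variable calculus exercise on $(-1, \infty)$. Define the auxiliary function
\[
    f(a) := \frac{a^2}{1+a} - v(a) = \frac{a^2}{1+a} - a + \log(1+a),
\]
so the claim becomes $f(a) \ge 0$ for $a > -1$. I would first check the boundary/base value $f(0) = 0$, so it suffices to show that $a = 0$ is the global minimum of $f$ on $(-1,\infty)$.

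To see this, I would compute $f'$ and watch for the expected cancellation. Differentiating term by term:
\[
    \frac{d}{da}\Bigl[\frac{a^2}{1+a}\Bigr] = \frac{a(a+2)}{(1+a)^2}, \qquad \frac{d}{da}\bigl[-a+\log(1+a)\bigr] = \frac{-a}{1+a},
\]
and combining over the common denominator $(1+a)^2$ yields, after cancelling the $a^2$ terms, the clean expression $f'(a) = \dfrac{a}{(1+a)^2}$. Since $(1+a)^2 > 0$ on the domain, $f'(a)$ has the sign of $a$: $f$ is strictly decreasing on $(-1,0)$ and strictly increasing on $(0,\infty)$. Hence $a=0$ is the unique minimizer and $f(a) \ge f(0) = 0$, which is the claim.

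An equivalent route, which I would mention only if the above is deemed insufficiently slick, is to rewrite the inequality as $\log(1+a) \ge a - a^2/(1+a) = a/(1+a)$, substitute $u = 1+a > 0$, and note that $\log u \ge 1 - 1/u$ is the standard inequality $\log x \le x-1$ applied to $x = 1/u$. Either way, there is no real obstacle here: the only thing to be careful about is the algebraic simplification of $f'$, where one must verify that the $a^2$ terms in the numerator cancel so that $f'$ actually changes sign at $a = 0$ rather than elsewhere; if that cancellation failed, the monotonicity story would break and a different argument would be needed.
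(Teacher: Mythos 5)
Your proof is correct and takes essentially the same route as the paper: the same auxiliary function $f(a)=a^2/(1+a)-v(a)$ with $f(0)=0$ and a sign analysis of $f'$ showing $a=0$ is the minimizer. Your computation $f'(a)=a/(1+a)^2$ is the right one (the paper has a typo, writing $a/(1+a^2)$, though the sign argument is unaffected), and your alternative reduction to $\log u \ge 1-1/u$ is a valid, slightly slicker variant.
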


\begin{proof}
Let $f(a) = a^2/(1+a)-v(a)$. By direct calculation, we see that $f'(a)=a/(1+a^2)$, which is a strictly negative function passing through $0$ at $a=0$. As $f(0)=0$, we have $f(a)\ge 0$ for all $a\ge 0$. Note that $\lim_{a\to -1^-}f(a) = +\infty$, so $f(a)$ must be positive on $(-1, 0)$. The claim follows.
\end{proof}

For the following result, consider the filtered probability space $(\Omega, \mathcal{F}, \{\mathcal{F}_k\}_{k\in\mathbb{N}}, \mathbb{P})$, and we consider expectations with respect to $\mathbb{P}$.

\begin{lemma}\label{lemma:quadratic}
Let $X_1, \dots , X_n$ be a sequence of $\mathcal{F}_{k}$-measurable random variables.  Let $Y_1, \dots Y_{n}$ be any other sequence of $\mathcal{F}_{k-1}$-measurable random variables such that the difference is bounded above: $A_k = Y_{k}-X_k> -\Delta$ for some $\Delta \ge 0$. Define $C=\tfrac{1}{n}\log\frac{1}{\epsilon}$, 
and let $B=\frac{1}{n}\sum_k\big(\E[X_k\mid\mathcal{F}_{k-1}]-X_k)\big)$. For any $0<t<1/\Delta$ and simultaneously for all $n$, we have
\begin{equation}
    \mathbb{P}\left[tB \le C +\frac{1}{n}\sum_{k=1}^n v(tA_k)\right] \ge 1-\epsilon.
\end{equation}

\end{lemma}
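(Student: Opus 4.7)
The plan is to construct an exponential supermartingale and apply Ville's inequality (which gives a uniform-in-$n$ maximal inequality), mirroring the standard route to Freedman-type bounds but with the key twist that the ``variance'' inside the exponent is replaced by $v(tA_k)$, an $\mathcal{F}_{k-1}$-measurable proxy. First I would set $Z_k := \E[X_k\mid\mathcal{F}_{k-1}]-X_k$, so $(Z_k)$ is a martingale-difference sequence, and then define
\begin{equation*}
    M_n := \exp\!\Bigl(\sum_{k=1}^n \bigl[tZ_k - v(tA_k)\bigr]\Bigr), \qquad M_0 := 1.
\end{equation*}

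The central step is showing that $M_n$ is a non-negative supermartingale for $0 < t < 1/\Delta$. To do this I would apply Lemma~\ref{lemma:expectedlog} conditionally on $\mathcal{F}_{k-1}$ with the random variable $Z := tZ_k$ and the (conditionally deterministic) constant $\mu := t\bigl(\E[X_k\mid\mathcal{F}_{k-1}] - Y_k\bigr)$. Then $Z-\mu = tA_k$, and the hypothesis $A_k > -\Delta$ combined with $t < 1/\Delta$ gives $tA_k > -1$, validating the lemma's range constraint. Since $\E[Z\mid\mathcal{F}_{k-1}] = 0$, Lemma~\ref{lemma:expectedlog} yields $\E[\exp(tZ_k - v(tA_k))\mid\mathcal{F}_{k-1}] = (1-\mu)e^\mu \le 1$, and multiplying by the $\mathcal{F}_{k-1}$-measurable quantity $M_{k-1}$ shows $\E[M_k\mid\mathcal{F}_{k-1}] \le M_{k-1}$.

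Having established supermartingality with $\E[M_0]=1$, I would apply Ville's inequality to get $\mathbb{P}\bigl[\sup_{n\ge 1} M_n \ge 1/\epsilon\bigr] \le \epsilon$, which gives simultaneously for all $n$ that $\sum_{k=1}^n tZ_k - \sum_{k=1}^n v(tA_k) < \log(1/\epsilon)$ on the complementary event. Dividing by $n$, recognizing $\tfrac1n\sum_k Z_k = B$ and $\tfrac1n\log(1/\epsilon) = C$, and rearranging delivers exactly the claimed inequality $tB \le C + \tfrac1n\sum_{k=1}^n v(tA_k)$ with probability at least $1-\epsilon$.

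The only place requiring care is the bookkeeping in the application of Lemma~\ref{lemma:expectedlog}: one must verify that the chosen $\mu$ is $\mathcal{F}_{k-1}$-measurable (so it can be pulled out of the conditional expectation as a constant) and that the identity $Z - \mu = tA_k$ holds, since this is what allows the empirical penalty $v(tA_k)$ to appear rather than an intractable $v$ of some conditional quantity. Lemma~\ref{lemma:property_of_v} is not needed here; it will be invoked only downstream when specializing $t$ and upper-bounding $v(tA_k)$ by $(tA_k)^2/(1+tA_k)$ to extract a variance-like term for Theorem~\ref{eq:freedman_inequality_maintext}. Uniformity in $n$ is automatic from Ville's inequality, so no union bound over $n$ is needed.
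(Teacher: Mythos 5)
Your proposal is correct and matches the paper's proof essentially step for step: the same exponential supermartingale built from $tZ_k - v(tA_k)$, the same conditional application of Lemma~\ref{lemma:expectedlog} with $\mu = t(\E[X_k\mid\mathcal{F}_{k-1}]-Y_k)$ so that $Z-\mu = tA_k > -1$, and the same use of Ville's inequality followed by taking logarithms and rearranging. The measurability bookkeeping you flag is exactly the point the paper relies on, so there is nothing to add.
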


\begin{proof}
Let $v(a) = a-\log (1+a)$. Define the random variable
\[
    M_k = \exp\left(t(\E[X_k\mid \mathcal{F}_{k-1}]-X_k) - v(t(Y_{k}-X_k))\right).
\]

Consider $Z=t(\E[X_k\mid\mathcal{F}_{k-1}]-X_k)$ and $\mu=t(\E[X_k\mid\mathcal{F}_{k-1}]-Y_k)$. By construction, we have that $\E[Z\mid\mathcal{F}_{k-1}]=0$ and $Z-\mu = t(Y_k-X_k) >-t\Delta>-1$. Thus, applying Lemma \ref{lemma:expectedlog}, we have
\[
    \E[M_k \mid \mathcal{F}_{k-1}] \le 1.
\]
Therefore $U_{n} = \prod_{k=1}^n M_k$ is a supermartingale. By Ville's inequality \citep{Ville1939}, we have
\[
    \sup_n U_n \le \frac{\E[U_0]}{\epsilon} \le 1/\epsilon
\]
with probability at least $1-\epsilon$. When this holds, using the definition of $U$ and taking the log of both sides, we have for all $n$,
\begin{equation}
    t\sum_{k=1}^n\big(\E[X_k\mid\mathcal{F}_{k-1}]-X_k)\big) -\sum_{k=1}^n v(t(Y_k-X_k)) \le \log \frac{1}{\epsilon}.\label{eq:a3.1}
\end{equation}

Defining $B = \frac{1}{n}\sum_k\big(\E[X_k\mid\mathcal{F}_{k-1}]-X_k\big)$, $C=\frac{1}{n}\log\frac{1}{\epsilon}$,  and $A_k = Y_k-X_k$, by rearrange \eqref{eq:a3.1}, we obtain
\begin{equation}
    tB \le C +\frac{1}{n}\sum_{k=1}^n v(tA_k),
\end{equation}
as desired.
\end{proof}
\begin{corollary}\label{lemma:variance-proxy}
Let $X_1, \dots , X_n$ be a sequence of $\mathcal{F}_{k}$-measurable random variables. Let $Y_1, \dots Y_{n}$ be any other sequence of $\mathcal{F}_{k-1}$-measurable random variables such that the difference is bounded below: $A_k=(Y_{k}-X_k)/\Delta > -1$ for some $\Delta \ge 0$. Let $K$ be a finite subset of $(0,1)$. Then, with probability at least $1-\delta$,
\begin{equation}
    \frac{1}{n}\sum_{k=1}^n\big(\E[X_k\mid\mathcal{F}_{k-1}]-X_k)\big) \le \Delta  C +\Sigma\sqrt{C},
\end{equation}
where $C := \frac{1}{n}\log{|K|}/{\delta}$, and
$$\Sigma(C,\Delta, \{X_k-Y_k\}_{k=1}^n,K) := \min\limits_{s \in K} \Delta\sqrt{C}(1-s)/s   +\frac{\Delta}{\sqrt{C}} \frac{1}{n}\sum_{k=1}^n v\big(sA_k\big)/s$$

\end{corollary}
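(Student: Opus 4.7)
The plan is to obtain this corollary as a direct consequence of Lemma A.3 (\texttt{lemma:quadratic}) combined with a union bound over the finite grid $K$ and a cosmetic rearrangement that exposes the leading $\Delta C$ term. The essential observation is that Lemma A.3 already supplies, for each fixed $t \in (0, 1/\Delta)$, an almost-sure bound on $tB$ in terms of the cumulant-like quantity $\frac{1}{n}\sum_k v(tA_k)$; what remains is to reparameterize $t$, apply the bound at all grid points, and collect terms.

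First, I would reparameterize by setting $t = s/\Delta$ for $s \in K \subset (0,1)$, so the constraint $t \in (0, 1/\Delta)$ of Lemma A.3 is automatic. Under this substitution, the lemma's increment $tA_k^{\mathrm{lem}} = (s/\Delta)(Y_k - X_k)$ coincides with $sA_k$ under the corollary's rescaled convention $A_k = (Y_k - X_k)/\Delta$, so the $v(tA_k^{\mathrm{lem}})$ in Lemma A.3 becomes $v(sA_k)$ in the corollary's notation. Applying Lemma A.3 for a single $s$ with failure probability $\epsilon = \delta/|K|$ and then taking a union bound over the $|K|$ grid points, I would conclude that with probability at least $1 - \delta$, the inequality
\begin{equation*}
    \tfrac{s}{\Delta} B \;\le\; C + \tfrac{1}{n}\sum_{k=1}^n v(sA_k),
    \qquad C := \tfrac{1}{n}\log\tfrac{|K|}{\delta},
\end{equation*}
holds simultaneously for every $s \in K$. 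The key point is that $C$ absorbs the union-bound cost exactly, which is why $K$ must be finite.

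Next comes the algebraic rearrangement. Dividing through by $s/\Delta$ gives $B \le \Delta C/s + (\Delta/s)\frac{1}{n}\sum_k v(sA_k)$. The trick is to split $\Delta C/s = \Delta C + \Delta C (1-s)/s$, which isolates a leading $\Delta C$ term independent of $s$ from an $s$-dependent remainder. Factoring a $\sqrt{C}$ out of the remainder yields
\begin{equation*}
    B \;\le\; \Delta C \;+\; \sqrt{C}\bigl[\Delta\sqrt{C}(1-s)/s \;+\; \tfrac{\Delta}{\sqrt{C}}\tfrac{1}{n}\sum_{k=1}^n v(sA_k)/s\bigr],
\end{equation*}
and the bracketed expression is precisely the summand defining $\Sigma$ at this particular $s$. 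Since the bound holds simultaneously for all $s \in K$, I may take the pointwise minimum over $s \in K$ on the right-hand side, which recovers $\Sigma$ as defined and completes the proof.

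There is no real obstacle here; the argument is essentially bookkeeping on top of Lemma A.3. The only point requiring mild care is the index convention on $A_k$: the corollary's $A_k$ is a rescaled version of the lemma's, and one must verify that $tA_k^{\mathrm{lem}} = sA_k$ under $t = s/\Delta$ so that the $v(\cdot)$ term carries over unchanged. After that, the identity $\Delta C/s = \Delta C + \Delta C(1-s)/s$ is the single line that converts the form ``$\Delta C/s + \text{stuff}$'' naturally produced by the lemma into the asserted ``$\Delta C + \Sigma\sqrt{C}$'' form, with the $\sqrt{C}$ normalization explaining the otherwise mysterious placement of $\sqrt{C}$ and $1/\sqrt{C}$ inside $\Sigma$.
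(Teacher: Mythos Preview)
Your proposal is correct and matches the paper's approach exactly: the paper's own proof is a single sentence that says to set $s=t\Delta$, apply a union bound to Lemma~\ref{lemma:quadratic} over the values of $s$ in $K$, take the minimizing $s$, and rearrange. You have filled in precisely those details, including the key identity $\Delta C/s = \Delta C + \Delta C(1-s)/s$ that produces the advertised $\Delta C + \Sigma\sqrt{C}$ form.
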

\begin{proof}
    Let $s=t\Delta$. Apply a union bound to Theorem $\ref{lemma:quadratic}$ over the different values of $s$ in $K$, and take the one that minimizes the bound. Rearrange and isolate terms yields the desired result.
\end{proof}

\begin{theorem}\label{eq:freedman_inequality}
Let $X_1, \dots , X_n$ be a sequence of $\mathcal{F}_{i-1}$-measurable random variables. Let $Y_0, \dots Y_{n-1}$ be any other sequence of $\mathcal{F}_{i-1}$ measurable sequence of random variables such that the difference is bounded above: $X_k-Y_k \le \Delta$ for some $\Delta \ge 0$. Define $V=\frac{1}{n}\sum_k(X_k-Y_k)^2$ and let $\delta \in (0, 1)$. Then, with probability at least $1-\delta$, we have
\begin{equation}
    \frac{1}{n}\sum_k\big(\E[X_k\mid\mathcal{F}_{k-1}]-X_k)\big) \le \Delta \mathcal C +2 \sqrt{V\mathcal C},
\end{equation}
where $\mathcal C \le n^{-1}\big(\log{1}/{\delta} +4\log\log{n}/{\delta}+6)$.
\end{theorem}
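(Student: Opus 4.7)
The plan is to specialize Corollary \ref{lemma:variance-proxy} to a finite geometric grid $K$ and then replace the analytic function $v$ by its elementary quadratic upper bound from Lemma \ref{lemma:property_of_v}. Substituting $v(sA_k)\le (sA_k)^2/(1+sA_k)$ together with the assumption $A_k>-1$ and $s\in(0,1)$ (so that $1+sA_k\ge 1-s$) into the expression for $\Sigma$, and using $n^{-1}\sum_k A_k^2 = V/\Delta^2$, reduces the variance proxy to
\begin{equation*}
    \Sigma \;\le\; \min_{s\in K}\;\Delta\sqrt{C}\,\tfrac{1-s}{s} \;+\; \tfrac{sV}{\Delta(1-s)\sqrt{C}}.
\end{equation*}
With the reparameterization $r=(1-s)/s$, the bracket equals $\Delta\sqrt{C}\,r+V/(r\Delta\sqrt{C})$, whose AM--GM minimum over all $r>0$ is exactly $2\sqrt{V}$, attained at $r^\star=\sqrt{V}/(\Delta\sqrt{C})$. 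Hence, if $r^\star$ were literally in the grid, the theorem would already follow with $\mathcal C=C$.

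To approximate the continuous optimum by a finite grid, take $r_j=m^{\,j-J/2}$ for $j=0,\dots,J$ with some base $m>1$, and set $K=\{1/(1+r_j)\}_j$. Whenever $r^\star$ lies in the grid range, the closest $r_j$ is within a multiplicative factor of $\sqrt m$, so the bracket at that $r_j$ is at most $(\sqrt m+1/\sqrt m)\sqrt{V}$. This gives $\Sigma\sqrt{C}\le 2\sqrt{V\cdot mC}$, and defining $\mathcal C:=mC$ produces the theorem's form $B\le\Delta\mathcal C+2\sqrt{V\mathcal C}$ (the deterioration on the $\Delta C$ term is automatic, since $C\le\mathcal C$). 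For $V$ outside the grid range the bound still holds: when $V$ is very small the smallest $r_j$ suffices because $\Delta\mathcal C$ already absorbs the $\Delta\sqrt{C}\,r_j\cdot\sqrt C$ term; when $V$ is very large a deterministic Cauchy--Schwarz-type control of $B^2$ by the quadratic variation associated to $V$ supersedes the concentration argument, so $\Delta\mathcal C+2\sqrt{V\mathcal C}$ is automatically satisfied.

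The main obstacle is the bookkeeping needed to produce the claimed $\mathcal C \le n^{-1}(\log(1/\delta)+4\log\log(n/\delta)+6)$. The base $m$ must be close enough to $1$ that the coefficient in front of $\sqrt{V\mathcal C}$ is preserved as $2$, yet $m-1$ cannot be too small or the grid size $J\sim\log(n)/\log(m)$ will outgrow the desired $\log\log$ scale. Taking $m=1+\Theta(1/\log(n/\delta))$ and a grid with $|K|=O(\log^2(n/\delta))$ gives $\log|K|=2\log\log(n/\delta)+O(1)$, so that $C=n^{-1}\log(|K|/\delta)=n^{-1}(\log(1/\delta)+2\log\log(n/\delta)+O(1))$; multiplying by the inflation factor $m$ and accounting for the constants introduced by the two boundary regimes of $V$ yields the explicit additive $4\log\log(n/\delta)+6$ in the final bound on $\mathcal C$.
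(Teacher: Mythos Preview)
Your high-level strategy is the same as the paper's: pass from Lemma~\ref{lemma:quadratic} (or equivalently Corollary~\ref{lemma:variance-proxy}) to a quadratic via $v(a)\le a^2/(1+a)$, then discretize the free parameter and pay for it in a union bound. The paper also does exactly this, but it rearranges the inequality $tB\le C+t^2V/(1-t\Delta)$ into the quadratic $0\le t^2(V+B\Delta)-(B+\Delta C)t+C$ and works with the minimizer $t^*=(B+\Delta C)/(2(V+\Delta B))$, which depends on $B$ as well as $V$; your AM--GM route with $r^\star=\sqrt{V}/(\Delta\sqrt C)$ depends only on $V$. The paper's parametrization is what lets it confine $t^*$ to the interval $(C/(2\Delta),1/\Delta)$ by a direct case split: if $t^*\ge1/\Delta$ then the definition of $t^*$ already forces $B\le\Delta C-2V/\Delta$, and the lower endpoint uses $V\le\Delta^2$.

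There is a genuine gap in your treatment of the large-$V$ regime. You assert that ``a deterministic Cauchy--Schwarz-type control of $B^2$ by the quadratic variation associated to $V$'' takes over, but under the stated hypotheses there is no deterministic inequality linking $B=\tfrac1n\sum_k(\E[X_k\mid\mathcal F_{k-1}]-X_k)$ to $V=\tfrac1n\sum_k(X_k-Y_k)^2$: the sequence $Y_k$ is arbitrary ($\mathcal F_{k-1}$-measurable), not $\E[X_k\mid\mathcal F_{k-1}]$, so Cauchy--Schwarz gives nothing. Concretely, if $r^\star>r_{\max}$ and you plug in $r_{\max}$, the term $V/(r_{\max}\Delta)$ grows linearly in $V$ while the target $2\sqrt{V\mathcal C}$ grows like $\sqrt{V}$, so the bound simply fails. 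The paper avoids this by using $V\le\Delta^2$ (implicitly the two-sided bound $|X_k-Y_k|\le\Delta$) to pin the grid range, and by making the optimum $t^*$ depend on $B$ so that the boundary case $t^*\ge1/\Delta$ translates directly into the desired bound on $B$ without any concentration step.

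Finally, the constants are asserted rather than tracked. The paper uses a ``floating-point'' grid $Q=\{\Delta^{-1}2^{-b}(1+k/K)\}$ with the specific choice $K=\lceil16\log(1/\delta)\rceil$, and computes the quantization error explicitly to get the $4\log\log(n/\delta)+6$ form. Your geometric grid with $m=1+\Theta(1/\log(n/\delta))$ and $|K|=O(\log^2(n/\delta))$ is in the right ballpark, but the jump from ``accounting for the constants introduced by the two boundary regimes'' to exactly $4\log\log(n/\delta)+6$ is not justified and, given the unresolved large-$V$ case, cannot yet be.
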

\begin{proof}

Starting from Lemma \ref{lemma:quadratic}, we apply Lemma \ref{lemma:property_of_v} of $v(a) \le a^2(1+a)$. For our convenience, here we will define $A_k=Y_k-X_k$. We have
\begin{align}
    tB &\le C + \frac{1}{n}\sum_k \frac{t^2 A_k^2}{1+tA_k}\notag\\
    &\le C + \frac{1}{n}\sum_k \frac{t^2 A_k^2}{1-t\Delta}, \label{eq:a3.2}
\end{align}
where the second inequality follows from the assumption that $A_k \ge -\Delta$.

Finally, by defining a variance term $V=\frac{1}{n}\sum_kA_k^2 =\frac{1}{n}\sum_k(X_k-Y_k)^2$ and rearranging \eqref{eq:a3.2}, we see
\begin{equation}
\label{eq:quadratic}
    0 \le t^2 (V+B \Delta) -(B+\Delta C)t+C,
\end{equation}
which we recall holds with probability at least $1-\epsilon$.

\textbf{Inequality Sketch:}
This inequality is very close to what we need. The approach would be to optimize over $t$ and then read off the constraint on $B$. The minimizer of the quadratic is at $t^* = \frac{B+\Delta C}{2(V+\Delta B)}$. Plugging in this value, one would arrive at
\begin{equation*}
    (1/4)\frac{(B+\Delta C)^2}{(V+\Delta B)}-(1/2)\frac{(B+\Delta C)^2}{(V+\Delta B)}+C \ge 0
\end{equation*}
Rearranging,
\begin{equation*}
    \frac{1}{4}(B+\Delta C)^2 -B\Delta C\le VC
\end{equation*}
\begin{equation*}
    \frac{1}{4}(B-\Delta C)^2\le VC
\end{equation*}
and finally,
\begin{equation*}
    B\le \Delta C+ 2\sqrt{VC}.
\end{equation*}
At a high level, this determines the overall form of Theorem \ref{eq:freedman_inequality}; however, some technical complications arise from the fact that $t$ must be deterministic and chosen ahead of time, e.g. it must not depend on the random variables $B$ and $C$. Instead we will consider optimizing $t$ over a discrete set of possibilities (not depending on $B$ or $C$), and consider a union bound over the different possibilities.

\textbf{Full derivation:}
Consider the quadratic, \eqref{eq:quadratic}. Its minimizer is given by
\[
    t^* = \frac{B+\Delta C}{2(V+\Delta B)}.
\]
Consider two cases: $t^* \ge \frac{1}{\Delta}$ and $t^*<\frac{1}{\Delta}$.
If $t^* \ge \frac{1}{\Delta}$, then rearranging and solving for $B$, we see
\[
    B \le \Delta C - 2V/\Delta,
\]
which is strictly less than the value $\Delta\mathcal{C}+2\sqrt{V\mathcal{C}}$, and we are done. 

Therefore it suffices to consider the case $t^*<1/\Delta$, where we can apply Lemma \ref{lemma:quadratic}. Note that this result applies for a single $t$, so it cannot be directly applied to $t^*$. Instead, we will turn to quantization and apply a union bound. Note that if $B>0$, using that $V\le \Delta^2$, we have $t^* \ge \frac{\Delta C}{2\Delta^2} = \frac{C}{2\Delta}$.
Therefore we only need to consider the range: $t^*\in (\frac{C}{2\Delta},\frac{1}{\Delta})=:T$.

Drawing inspiration from floating point numbers, consider a discrete set $Q$ defined as
\[
    Q = \left\{\tfrac{1}{\Delta}2^{-b}\left(1+\frac{k}{K}\right)\ \bigg|\ k=0,1,...,K-1, b \in \mathbb{N}^+\right\}
\]
for some $K \in \mathbb{N}$. Let
\[
    q(a) =\underset{{q\in Q}}{\arg\min} |q-a|.
\]
From this, we can determine that the quantization error is bounded by
\[
    \sup _{a\in T} \frac{|q(a)-a|}{a} \le \frac{1}{K}.
\]

Define a prior over the values of $Q$:
\[
    P(q_{k,b})=P(k)P(b)=\frac{1}{K}\frac{1}{Z(b+2)(\log_2 (b+2))^2}.
\]
By direct calculation, we see that $1=\sum_{k,b}P(k)P(b)= \big(\sum_{b=0}^\infty \frac{1}{(b+2)(\log_2 (b+2))^2}\big)/Z \le 1/Z$,
therefore $Z \le 1$. 

Now we apply a union bound for Lemma \ref{lemma:quadratic} over values of $t \in Q$. For each $t\in Q$, we set $\epsilon(t)=\delta P(t)$. We have
\begin{align}
    &\mathbb{P}\left[\forall t \in Q: t^2 (V+B \Delta) -(B+\Delta C_{\epsilon(t)})t+C_{\epsilon(t)} < 0\right]\notag \\
    &\qquad\le \sum_{t\in Q} \mathbb{P}\left[t^2 (V+B \Delta) -(B+\Delta C_{\epsilon(t)})t+C_{\epsilon(t)} < 0\right]\notag \\
    &\qquad\le \sum_{t\in Q} \epsilon(t) = \delta \sum_{t\in Q} P(t)= \delta.\notag
\end{align}
Therefore, uniformly for all $t\in Q$, we have
\begin{equation}
    t^2 (V+B \Delta) -(B+\Delta C_{\epsilon(t)})t+C_{\epsilon(t)} \ge 0\label{eq:a4.1}
\end{equation}
with probability at least $1-\delta$. 

Now we plug in $t=q(t^*)$. Note that $ 0\le b\le \log_2 \frac{2}{C}$, so we have
\begin{align}
    \log \frac{1}{\epsilon(t)} = \log \frac{1}{\delta P(t)} &\le \log \frac{K}{\delta}+\log (3+\log_2 1/C)+2\log \log_2 (3+\log_2 1/C)\notag\\
    &\le \log \frac{K}{\delta} +2+ 2 \log \log 1/C \le  \log \frac{K}{\delta} +2+ 2\log \log n\label{eq:a4.2}
\end{align}

Plugging in this quantized value of $t$ to \eqref{eq:a4.1} and using the quantized error bound, \eqref{eq:a4.2}, we have
\begin{align}
    \frac{(B+\Delta C)^2}{4(V+\Delta B)} &\le  C+\frac{3}{K}\bigg(1+\frac{1}{K}\bigg)\frac{1}{4}\frac{(B+\Delta C)^2}{(V+\Delta B)}\notag\\
    &\le C(1+4/K),\label{eq:a4.3}
\end{align}
where in the second line we chose a $K \ge 6$ so that we have $\left(1-\frac{3}{K}\left(1+\frac{1}{K}\right)\right)^{-1} \le 1+\frac{4}{K}$.

Solving \eqref{eq:a4.3} for $B$, we have the inequality
\begin{align}
    B &\le \Delta C(1+8/K) + \sqrt{\Delta^2C^2(8/K)^2+4CV(1+4/K)}\notag\\
    &\le \Delta C(1+16/K) + 2\sqrt{V C(1+16/K)},\label{eq:a4.4}
\end{align}
where the second line follows from the fact that $\sqrt{x+y} \le \sqrt{x}+\sqrt{y}$.

Define $\mathcal C = C(1+16/K)$. Choosing $K=\ceil{16\log{1}/{\delta}}$ (which is $>6$),
we have 
\begin{equation}
    n\mathcal C \le \log 1/\delta +1+\big[\log(\lceil16\log 1/\delta\rceil)+2+2\log\log n\big]\big(1+1/\log(1/\delta)\big) \label{eq:a4.5}  
\end{equation}

Applying some simplifications to \eqref{eq:a4.4} and \eqref{eq:a4.5}, we obtain
\[
    B \le \Delta \mathcal{C} + 2\sqrt{V\mathcal{C}}.
\]
with
\[
    \mathcal C \le \frac{1}{n}\big(\log{1}/{\delta} +4\log\log{n}/{\delta}+6),
\]
as desired.
\end{proof}

\subsection{Generalization Bound}\label{app:generalization-bound}
\textbf{Converting to Prefix Free Codes} Through Kraft-Mcmillan inequality \citep{kraft1949device,mcmillan1956two} a binary prefix free code exists if and only if the code lengths $\ell_i$ for the different elements satisfy $\sum_i 2^{-\ell_i} \le 1$.
Thus if we have $L(h)$ as the length of a non prefix free code for each $h$, to find the length of a valid prefix code we just need to find $\ell(h)$ such that $\sum_h 2^{-\ell(h)} \le 1$. Using $\ell(h)=L(h)$, the sum would diverge. Instead, consider $\ell(L)=L+2\log_2 (L)+1$. Computing the sum, $\sum_h 2^{-\ell(h)} = \sum_{L=1}^\infty 2^{L}2^{-\ell(L)}  = \sum_{L=1}^\infty\frac{1}{2L^2} = \pi^2/12 <1$. Thus we can convert any non prefix free code into a prefix free code with using $L+2\log_2 (L)+1$ bits.

Simultaneously, one can use the prior $P(h) = \frac{12}{\pi^2} 2^{-\ell(h)}$ for any countable hypothesis class, placing higher mass on elements with shorter descriptions, closely related to the Solomonoff prior.

If we know the length of the object ahead of time, then we are free to use a regular code in place of a prefix free code. For a fixed number of parameters $N$ and bits per parameter $b$, we know the length of the code, and thus we can use $\ell(h) \le bN$.

\textbf{Weighted Union Bound}
Applying \autoref{eq:freedman_inequality_maintext} to the sequence $R_h(X_k\mid X_{<k})$ with $\delta(h) = \epsilon P(h)$ for each hypothesis individually, the probability that the bound is violated for an arbitrary hypothesis constrained with a union bound $\sum_h \epsilon P(h) = \epsilon$, and therefore \autoref{eq:generalization_bound_maintext} holds with probability at least $1-\epsilon$ (replacing $\delta$ with $\epsilon$ in its expression). The $\log 1/\delta$ in \autoref{eq:freedman_inequality} becomes $\log 1/\delta+\log 1/P(h) \le \log 1/\delta+\ell(h)\log 2$.

\subsection{Prediction Smoothing}\label{app:prediction-smoothing}
\begin{theorem}
For the categorical negative log likelihood objective $\hat{R}_h = -\frac{1}{n}\sum_{k=1}^n\log p_h(X_k\mid X_{<k})$ on $V$ classes and $C\in \mathbb{R}^+$, there exists a prediction smoothed model $p_s(\cdot) = (1-\alpha)p_h(\cdot)+\alpha/V$ with worst case loss $\sup_{X_k, X_{<k}}-\log p_s(X_k\mid X_{<k}) \le \Delta_s$ that satisfies
\begin{equation}
    \hat{R}_s +C\Delta_s \le \hat{R}_h+ C\log V+\sqrt{2C},
\end{equation}
for some value $\alpha(C,V) \in (0,1)$ (approximately $C/(1+C)$).
\end{theorem}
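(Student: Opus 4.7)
The plan is to establish the worst-case loss bound, a pointwise bound on the smoothed loss, combine them, and then optimize the smoothing parameter $\alpha$. The key analytic inequality to establish is a scalar inequality of the form $\phi(C)\le\sqrt{2C}$.

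First I would verify the worst-case bound. Since $p_s(X_k\mid X_{<k})=(1-\alpha)p_h(X_k\mid X_{<k})+\alpha/V\ge \alpha/V$ for every $X_k,X_{<k}$, taking negative logarithms gives $-\log p_s(X_k\mid X_{<k})\le \log(V/\alpha)$ pointwise, establishing $\Delta_s\le\log(V/\alpha)$.

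Second, I would bound the smoothed empirical risk in terms of the unsmoothed one. Since $p_s\ge (1-\alpha)p_h$, one has $-\log p_s(X_k\mid X_{<k})\le -\log(1-\alpha)-\log p_h(X_k\mid X_{<k})$ token-by-token. Averaging over $k$ yields
\begin{equation*}
\hat R_s \;\le\; \hat R_h + \log\tfrac{1}{1-\alpha}.
\end{equation*}
Combining with the worst-case bound on $\Delta_s$ gives
\begin{equation*}
\hat R_s + C\Delta_s \;\le\; \hat R_h + C\log V + F(\alpha), \qquad F(\alpha):=\log\tfrac{1}{1-\alpha}+C\log\tfrac{1}{\alpha}.
\end{equation*}
It thus suffices to exhibit some $\alpha\in(0,1)$ with $F(\alpha)\le\sqrt{2C}$. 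Differentiating, $F'(\alpha)=\frac{1}{1-\alpha}-\frac{C}{\alpha}$ vanishes at $\alpha^\star=\frac{C}{1+C}$, which is the global minimizer on $(0,1)$ since $F$ is convex. Plugging in,
\begin{equation*}
F(\alpha^\star) \;=\; (1+C)\log(1+C)-C\log C \;=:\; \phi(C).
\end{equation*}

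The remaining, and main, obstacle is the scalar inequality $\phi(C)\le\sqrt{2C}$ for all $C>0$. Both sides vanish at $C=0$, and numerically the two curves come within a few percent of each other near $C\approx 1$, so the inequality has little slack and cannot be derived from a termwise bound on the integrand $\phi'(C)=\log(1+1/C)$ against $(2C)^{-1/2}$. The cleanest route I see is the identity $\phi(C)=(1+C)\,H_b\!\left(\tfrac{C}{1+C}\right)$, where $H_b$ is the binary entropy in nats, which reduces the claim to the standard inequality $H_b(p)\le\sqrt{2p(1-p)}$ for $p\in(0,1)$: setting $p=\alpha^\star=C/(1+C)$ gives $(1+C)\sqrt{2p(1-p)}=\sqrt{2C}$. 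This binary entropy inequality is itself proven by analyzing $\psi(p):=2p(1-p)-H_b(p)^2$, noting $\psi(0)=\psi(1)=0$ and $\psi(1/2)=\tfrac12-(\log 2)^2>0$, and verifying non-negativity on $(0,1)$ via a symmetry/convexity argument about $p=1/2$. Assembling all pieces, choosing $\alpha=\alpha^\star=C/(1+C)$ (which is precisely the ``approximately $C/(1+C)$'' in the statement) yields the claimed bound.
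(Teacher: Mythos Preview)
Your proof is correct and follows essentially the same route as the paper: bound $\Delta_s\le\log(V/\alpha)$, bound $\hat R_s-\hat R_h$ pointwise, optimize over $\alpha$, and reduce to the scalar inequality $(1+C)\log(1+C)-C\log C\le\sqrt{2C}$. The only cosmetic difference is that the paper keeps the slightly sharper lower bound $p_s\ge(1-\alpha+\alpha/V)p_h$, giving the $V$-dependent minimizer $\alpha=\tfrac{VC}{(V-1)(1+C)}$, whereas you drop the $\alpha/V$ term and get $\alpha^\star=C/(1+C)$ directly; both reduce to the identical scalar inequality, which the paper simply asserts while you justify it via the binary-entropy bound $H_b(p)\le\sqrt{2p(1-p)}$.
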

\begin{proof}
    We have \begin{align*}
    -\log p_s &= -\log \big((1-\alpha)p_h+\alpha/V)\\
    &\le -\log p_h - \log \big(1-\alpha+\alpha/V).
\end{align*}

Noting that $-\log p_s(X_k|X_{<k}) \le \Delta_s=\log (V/\alpha)$, so adding $C\Delta$ to both sides yields
\begin{equation}
   \hat{R}_s +C\Delta_s \le \hat{R}_h -\log(1-\alpha+\alpha/V) +C\log(V/\alpha),\label{eq:a5.1}
\end{equation}
where the right-hand side is minimized at
\[
    \alpha = \frac{VC}{(V-1)(1+C)}.
\]
Note that $\alpha$ is a \emph{deterministic quantity} that we can compute ahead of time based on the model we are bounding. Therefore we need not pay additional bits for a union bound over values of $\alpha$. Substituting $\alpha$ into \eqref{eq:a5.1}, we have
\begin{align*}
     \hat{R}_s - \hat{R}_h  + C\Delta_s &\le \log (1+C) + C\log \frac{(V-1)(1+C)}{C}\\
     &\le (1+C)\log (1+C) + C\log (V/C)\\
     &\le C\log V+ \sqrt{2C},
\end{align*}
where the last line follows from $(1+x)\log (1+x)-x\log x \le \sqrt{2x}$ for $x>0$. The claim follows.
\end{proof}

\section{Quantizability from the Hessian}\label{app:quip}

We have shown that the quantization gap tends to be quite small in practice, but why is this the case? A more complete explanation of why LLMs generalize would need to explain why they are readily quantizable, not just why they should achieve a small generalization gap if they are quantizable. In this section we attempt to shed light on why there should exist quantized models which achieve low quantization error using a small number of bits per parameter for large models. Here, like in \autoref{subsec:information_transfer} we focus on demonstrating that these schemes exist, even if they are difficult to find or computationally inefficient to use in practice. 

As a starting point in the analysis of many quantization schemes \citep{nagel2020up}, consider the Lagrange remainder form of the quadratic Taylor expansion of the loss around a given solution of the weights $\theta$, with $\hat \theta$ being our desired quantization. From this expansion we have
\begin{equation*}
    L(\hat\theta) = L(\theta)+g^\top (\hat\theta-\theta)+(\hat\theta-\theta)^\top H (\hat\theta-\theta)
\end{equation*}
holding with equality for $g$ evaluated at $\theta$ and the Hessian $H$ evaluated at an unknown but fixed point $\xi$ on the linear path between $\theta$ and $\hat \theta$, and with no higher order terms. If we use a stochastic rounding algorithm that is unbiased, then the first order term can be neglected as 
\begin{equation*}
    \mathbb{E}[g^\top (\hat\theta-\theta)] = 0,
\end{equation*}
and a high dimensional vector $\theta$ ensures the sum will concentrate around the expectation. This leaves the quadratic form with the Hessian. This quadratic form is what many adaptive rounding schemes minimize through the design of their algorithms and in their analysis. 

A key property for low precision quantization of the weights (while minimizing the quadratic quantization error) is that the scale of the individual components of the eigenvectors of $H$ do not differ by a large extent. If they do, then the quantization range must simultaneously provide coverage over a large range of values. This criterion is formalized through the notion of incoherence, introduced in \citet{chee2024quip}, which we briefly present below with a simplification of their more general analysis.

\subsection{Incoherence}

A Hessian is $\mu$-incoherent if the eigenvectors in the decomposition $Q\Lambda Q^\top=H\in \mathbb{R}^{N\times N}$ satisfy
\begin{equation*}
    \forall i,j: |Q_{ij}| \le \mu /\sqrt{N},
\end{equation*}
and a parameter vector $\theta$ is $\mu$-incoherent if it satisfies $\forall j: |\theta_j| \le \mu\|\theta\|/\sqrt{N}$. Intuitively this condition can be understood to be stating that the elements are not much more extreme in magnitude than that of an equivalently sized Gaussian random matrix or Gaussian random vector.

A key insight from QuIP \citep{chee2024quip} is that rather than quantizing the weights $\theta$, one should look to quantize the weights after applying a random orthogonal transformation matrix $P\in \mathbb{R}^{N\times N}$. Even as the original weights $\theta$ and eigenvectors $Q$ may be more sharply peaked for certain dimensions, multiplying by a random matrix helps to spread these extreme values across dimensions leading to a more similar range of values and more easy quantization.

Let $w = P^\top \theta$ and likewise $\theta = Pw$. Applying this Gaussian random matrix, the Hessian is transformed: $H_w = P^\top H_\theta P$ and likewise the eigenvectors $Q$ from $H_\theta = Q\Lambda Q^\top$ are also multiplied $Q^\top \mapsto Q^\top P$. If we choose $P$ as a random Gaussian matrix: $\mathcal{N}(0,1/N)^{N\times N}$, applying a rotation by $Q^\top$ preserves the spherically symmetric distribution. Therefore, the eigenvectors $Q^\top P$ of $H_w$ are $\mathcal{N}(0,1/N)^{N\times N}$ distributed. Applying a union bound over the Gaussian tail probability of the $N^2$ elements, the maximum absolute value entry of $Q$ is at most $\sqrt{\frac{2\log (2N^2/\delta)}{N}}$ with probability $1-\delta$ and therefore incoherent with $\mu=\sqrt{2\log (2N^2/\delta)}$. This level of incoherence after applying the random transformation makes for easy quantizability, and as we will see in the next section it has implications on how the quantizability changes with the number of parameters $N$ in the neural network.  

\subsection{Scalar LDLQ}

QuIP introduces the LDLQ quantization algorithm which quantizes weights sequentially and autoregressively taking into account how previous quantized values impact the quadratic Taylor expansion of the loss.
Applying LDLQ to the entire vector of weights $w$ rather than block by block, one has the following relation on the quantized weights $\hat{w}$. Let $L^\top DL = H_w$ be the LDL decomposition of $H_w = P^\top H_\theta P$, then we can express the quantization of the weights as
\begin{equation*}
    \hat{w} = \mathcal{Q}(w + (L-I)(w-\hat{w}))
\end{equation*}
where $\mathcal{Q}$ quantizes the weights element-wise with nearest or unbiased stochastic rounding. As $L-I$ is a lower triangular matrix, the full $\hat{w}$ can be quantized sequentially in an autoregressive manner. With this quantization scheme, \citet{tseng2024quip} prove that the error of the quadratic in the Taylor expansion satisfies
\begin{equation}
    (\hat{w}-w)^\top H (w-\hat{w}) \le \frac{\mu^2 \sigma^2}{N}\mathrm{Tr}(H^{1/2})^2,
\end{equation}
where the pointwise quantization error of the scalar quantizer is assumed to be $\mathbb{E}[\big(\mathcal Q(x)-x\big)^2] \le \sigma^2$ (see Theorem 4.1 of \citet{chee2024quip} applied to block size $1$ and a single $N\times 1$ weight matrix), where $\sigma^2$ is a function of the bitrate. For example $x\in [0,1]$, then a uniform grid would achieve $\sigma^2 = 2^{-2b-2}$ for $b$ bits per parameter. The range of $x$ is not $[0,1]$ and constraining the number of bits required for more sophisticated schemes requires additional analysis, but for illustrative purposes we will use $\sigma^2 = 2^{-2b-2}$. Putting the pieces together, we have that the difference in loss between the quantized model and unquantized model is
\begin{equation*}
    L(\hat{w})-L(w)\le \frac{2\log (2N^2/\delta) 2^{-2b-2}}{N}\mathrm{Tr}(H^{1/2})^2
\end{equation*}
for $H$ evaluated at some point $\xi$ on the linear path between $\hat{w}$ and $w$. Setting the acceptable quantization error $Q=L(\hat{w})-L(w)$ we can solve for $b$, finding
\begin{equation*}
b \le \log_2\bigg(\frac{\mathrm{Tr}(H^{1/2})}{\sqrt{NQ}}\bigg) +(1/2)\log_2\log(2N^2/\delta)-1/2.
\end{equation*}

Here we see that the number of bits per parameter required depends most heavily on $\log_2\bigg(\frac{\mathrm{Tr}(H^{1/2})}{\sqrt{NQ}}\bigg)$. If the spectrum of $H$ is such that $\mathrm{Tr}(H^{1/2})$ scales slower than $\sqrt{N}$ (such as if $H$ were low rank or has a spectrum the decays sufficiently rapidly), then the number of bits required for quantization at a fixed loss decreases with scale. In the following section we provide some empirical evidence that indeed this is the case, and hence why larger models become more quantizable (even if the quantization scheme is impractical to implement).

\subsection{Estimating $\Tr(H^{1/2})$} \label{app:quip-empirics}

To estimate the trace of the square root of the Hessian matrix, $\Tr(H^{1/2})$, we begin by assuming that the Hessian is positive semi-definite (i.e., it contains no negative eigenvalues). The square root of the Hessian, denoted as $S$, can be expressed as:
\[
S = \sum_{i=1}^{P} \sqrt{\lambda_{i}} \phi_{i} \phi_{i}^{T},
\]
where $\lambda_{i}$ and $\phi_{i}$ represent the eigenvalues and corresponding eigenvectors of the Hessian, respectively. Consequently, the trace of the square root of the Hessian is:
\[
\Tr(H^{1/2}) = \sum_{i=1}^{P} \sqrt{\lambda_{i}} = n \int_{0}^{\infty} p(\lambda) \sqrt{\lambda} \, d\lambda,
\]
where $p(\lambda)$ is the spectral density function associated with the Hessian's eigenvalues.

A direct computation of the full eigendecomposition to obtain $\Tr(H^{1/2})$ has a computational complexity of $\mathcal{O}(n^{3})$, which is infeasible for large models. Instead, we employ stochastic spectral density estimation techniques \citep{granziol2018mlrg,papyan2019measurements,ghorbani2019investigation}, which scale linearly with the number of parameters. The key idea involves using the Pearlmutter trick \citep{pearlmutter1994fast} to efficiently compute Hessian-vector products:
\[
\nabla (\nabla L^{T} v) = H v,
\]
where $v$ is a random vector. This allows us to approximate the trace by leveraging the identity:
\[
\Tr(H) = \mathbb{E}[\Tr(vv^{T} H)] = \mathbb{E}[v^{T} H v],
\]
assuming $v$ has zero mean and unit variance. These stochastic methods are well-established in machine learning \citep{fitzsimons2017entropic,dong2017scalable}.

Building upon the work of \citet{ubaru2017fast}, we can derive an explicit bound on the estimation of $\Tr(H^{1/2})$ using stochastic Lanczos quadrature (SLQ).

\begin{theorem}
    Let $\mH \in \mathbb{R}^{n \times n}$ be a symmetric positive definite matrix with eigenvalues ordered as $\lambda_{1} \geq \lambda_{2} \geq \dots \geq \lambda_{n}$ and condition number $\kappa = \frac{\lambda_{1}}{\lambda_{n}}$. For any $\epsilon, \eta \in (0,1)$, if the SLQ parameters satisfy
    \[
    m \geq \frac{\sqrt{\kappa}}{4} \log \frac{K}{\epsilon} \quad \text{(Lanczos steps)},
    \]
    \[
    n_{v} \geq \frac{24}{\epsilon^{2}} \log \frac{2}{\eta} \quad \text{(Rademacher vectors)},
    \]
    where $K = (\lambda_{\max} - \lambda_{\min})(\sqrt{\kappa} - 1)^{2}$, then the output $\Gamma$ of stochastic Lanczos quadrature satisfies:
    \[
    \Pr\left[ \left| \frac{\Tr(\sqrt{\mH}) - \Gamma}{\Tr(\sqrt{\mH})} \right| \leq \epsilon \right] \geq 1 - \eta.
    \]
\end{theorem}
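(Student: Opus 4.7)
The strategy is to follow the approach of Ubaru et al.\ (2017) by decomposing the relative error $|\Tr(\sqrt{H}) - \Gamma|/\Tr(\sqrt{H})$ into two contributions: (i) the Gauss quadrature error from the $m$-step Lanczos procedure applied to each probe vector, and (ii) the Hutchinson Monte Carlo error from averaging over $n_v$ Rademacher vectors. A union bound on these two sources will yield the claimed high-probability guarantee, and the two sample-size conditions in the statement arise directly from inverting the two corresponding error bounds at level $\epsilon/2$.

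First I would handle the quadrature error. For each Rademacher vector $v$, the $m$-step Lanczos process implicitly produces an $m$-node Gauss quadrature rule approximating $v^\top \sqrt{H} v = \int_{\lambda_n}^{\lambda_1} \sqrt{\lambda}\, d\mu_v(\lambda)$, where $\mu_v$ is the spectral measure of $H$ induced by $v$. By exactness of Gauss quadrature on polynomials of degree $\le 2m-1$, the per-vector error is bounded by twice the best uniform approximation of $f(\lambda)=\sqrt{\lambda}$ on $[\lambda_n,\lambda_1]$ by a polynomial of degree $2m-1$. After the affine map to $[-1,1]$, the function $\sqrt{\lambda}$ extends analytically onto a Bernstein ellipse whose semi-axis parameter $\rho$ satisfies $\rho = (\sqrt{\kappa}+1)/(\sqrt{\kappa}-1)$, so that $\log\rho \asymp 2/\sqrt{\kappa}$. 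The standard Chebyshev truncation estimate then gives geometric decay of the form $(\lambda_{\max}-\lambda_{\min})(\sqrt{\kappa}-1)^{2}\,\rho^{-2m} = K\rho^{-2m}$. Demanding that this be $\le \epsilon/2$ and using $\log\rho \ge 4/\sqrt{\kappa}$ (up to absorbing constants) produces the stated requirement $m \ge \tfrac{\sqrt{\kappa}}{4}\log(K/\epsilon)$.

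Next I would bound the stochastic part. Conditional on the quadrature rule, the estimator $\Gamma = \frac{1}{n_v}\sum_{i=1}^{n_v} v_i^\top \sqrt{H} v_i$ is unbiased for $\Tr(\sqrt{H})$ because $\E[v v^\top]=I$ for Rademacher $v$. The Hutchinson sample-complexity bound of Roosta-Khorasani and Ascher (a Hanson--Wright / sub-exponential Bernstein inequality applied to the quadratic forms $v_i^\top \sqrt{H} v_i$) gives
\[
\Pr\!\left[\,\bigl|\Gamma - \Tr(\sqrt{H})\bigr| \ge \tfrac{\epsilon}{2}\,\Tr(\sqrt{H})\,\right] \;\le\; 2\exp\!\bigl(-n_v \epsilon^{2}/24\bigr),
\]
and setting the right-hand side equal to $\eta$ yields $n_v \ge (24/\epsilon^{2})\log(2/\eta)$.

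Combining the two estimates via the triangle inequality and a union bound then delivers the stated $\epsilon$ relative error with probability at least $1-\eta$. The main obstacle is the quadrature step: carefully tracking how the Bernstein ellipse parameter depends on $\kappa$ after the affine rescaling of $[\lambda_n,\lambda_1]$ to $[-1,1]$, and then producing the explicit prefactor $K=(\lambda_{\max}-\lambda_{\min})(\sqrt{\kappa}-1)^{2}$, requires a delicate Chebyshev-coefficient estimate for $\sqrt{\,\cdot\,}$ together with a bound on the total mass $\mu_v([\lambda_n,\lambda_1])=\|v\|^{2}=n$; the Hutchinson concentration and the final union bound are comparatively routine.
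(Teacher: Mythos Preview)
Your proposal is correct and follows essentially the same approach as the paper: both invoke the Ubaru--Chen--Saad framework, splitting the error into the Lanczos/Gauss-quadrature term (controlled via analytic extension of $\sqrt{\cdot}$ onto a Bernstein ellipse with parameter $\rho=(\sqrt{\kappa}+1)/(\sqrt{\kappa}-1)$ after the affine map to $[-1,1]$, yielding the prefactor $K$) and the Hutchinson Monte Carlo term (controlled via the Roosta-Khorasani--Ascher concentration bound), then combining the two. The paper's own argument is only a sketch deferring to Ubaru et al.\ with $f(x)=\sqrt{x}$, so your level of detail already matches or exceeds it.
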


The proof of this theorem is provided in \autoref{sec:slQ}. However, we observe that the bound on the trace provided here is overly conservative for practical purposes. Therefore, we also establish a result demonstrating self-averaging, which shows that the estimator converges to the true value based on a single random vector.

\begin{theorem}
    For a single random vector $v$, the signal-to-noise ratio of the trace estimator for a matrix $\mH \in \mathbb{R}^{n \times n}$, where the spectral moments of $\mH$ do not depend on the matrix dimension, scales as:
    \[
    \frac{\sqrt{\Var (v^{T} \mH v)}}{\mathbb{E}(v^{T} \mH v)} = \mathcal{O}(n^{-\frac{1}{2}}).
    \]
\end{theorem}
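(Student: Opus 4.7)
The plan is to compute the numerator and denominator of the SNR separately, expressing each in terms of the spectral moments $m_k := \tfrac{1}{n}\Tr(\mH^k)$, and then read off the scaling from the assumption that each $m_k$ is independent of $n$.

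First, I would compute the mean. For a random vector $v \in \mathbb{R}^n$ with independent entries satisfying $\mathbb{E}[v_i] = 0$ and $\mathbb{E}[v_i^2] = 1$ (e.g., Rademacher or standard Gaussian), a direct index expansion gives
\begin{equation*}
\mathbb{E}[v^{T} \mH v] = \sum_{i,j} \mH_{ij}\,\mathbb{E}[v_i v_j] = \sum_i \mH_{ii} = \Tr(\mH) = n m_1.
\end{equation*}

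Next I would compute the variance. Expanding $(v^T \mH v)^2 = \sum_{i,j,k,l} \mH_{ij}\mH_{kl} v_i v_j v_k v_l$ and using the fourth moment structure of a Rademacher vector (only index pairings survive, plus a diagonal correction from $v_i^4 = 1$), one obtains the standard identity
\begin{equation*}
\Var(v^T \mH v) = 2\|\mH\|_F^2 - 2\sum_i \mH_{ii}^2 \;\le\; 2\|\mH\|_F^2 = 2\Tr(\mH^2) = 2 n m_2,
\end{equation*}
with the Gaussian case yielding exactly $2\|\mH\|_F^2$. Either way, $\Var(v^T \mH v) = \Theta(n m_2)$.

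Assembling the ratio, under the hypothesis that $m_1, m_2$ are $O(1)$ in $n$,
\begin{equation*}
\frac{\sqrt{\Var(v^T \mH v)}}{\mathbb{E}[v^T \mH v]} \;=\; \frac{\sqrt{2 n m_2} + o(\sqrt{n})}{n m_1} \;=\; \mathcal{O}(n^{-1/2}),
\end{equation*}
which is the claim. To transfer the result from the raw quadratic form $v^T \mH v$ to the SLQ output targeting $\Tr(\sqrt{\mH})$, one replaces $\mH$ by $\sqrt{\mH}$ throughout; then $m_1$ becomes $\tfrac{1}{n}\sum_i \sqrt{\lambda_i}$ and $m_2$ becomes $\tfrac{1}{n}\sum_i \lambda_i$, both $O(1)$ under the same spectral-moment assumption, so the scaling is preserved.

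The only real subtlety, and the place I would be most careful, is the precise formulation of the phrase \emph{the spectral moments do not depend on the matrix dimension}. The argument needs $m_1$ bounded \textbf{away from zero} (not just bounded above) for the denominator not to collapse; otherwise the SNR could degrade. I would therefore state the hypothesis as $m_1, m_2 = \Theta(1)$ uniformly in $n$, note that this is natural for a Hessian whose limiting spectral density (e.g.\ a bulk plus outliers as reported in the empirical Hessian studies cited) has nontrivial first and second moments, and flag that without such a lower bound on $m_1$ the $n^{-1/2}$ rate should be interpreted as an upper bound on the noise relative to the natural $\Theta(n)$ scale of the trace.
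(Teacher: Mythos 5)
Your proposal is correct and follows essentially the same route as the paper: expand the quadratic form to get $\mathbb{E}[v^T \mH v] = \Tr(\mH)$, use the fourth-moment structure of the entries to bound $\Var(v^T \mH v)$ by a constant times $\Tr(\mH^2)$, and conclude the $\mathcal{O}(n^{-1/2})$ rate from the ratio $\sqrt{\Tr(\mH^2)}/\Tr(\mH)$ once the spectral moments are dimension-independent (the paper states this for a general unit-variance distribution with fourth moment $m_4$, recovering your Rademacher and Gaussian cases as special instances). Your caveat that the first spectral moment must be bounded away from zero is well taken and corresponds to the paper's restriction to $\mH \succ c\mI$ when forming the signal-to-noise ratio.
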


\begin{figure}[h!]
    \centering
    \begin{subfigure}[b]{0.33\textwidth}
        \centering
        \includegraphics[width=\textwidth]{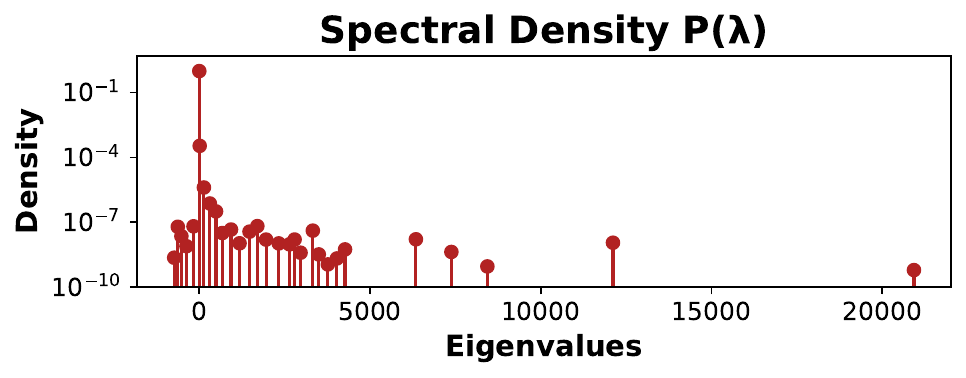}
        \caption{Dataset Fraction 0.01}
        \label{fig:small}
    \end{subfigure}%
    \hfill
    \begin{subfigure}[b]{0.33\textwidth}
        \centering
        \includegraphics[width=\textwidth]{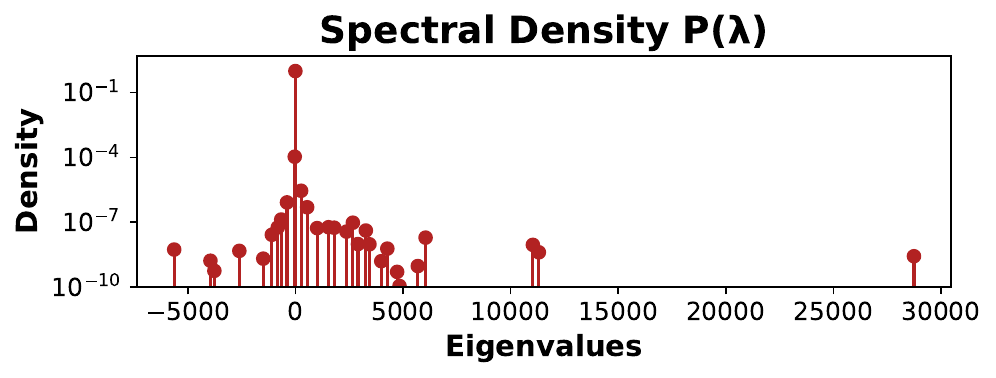}
        \caption{Dataset Fraction 0.001}
        \label{fig:smaller}
    \end{subfigure}%
    \hfill
    \begin{subfigure}[b]{0.33\textwidth}
        \centering
        \includegraphics[width=\textwidth]{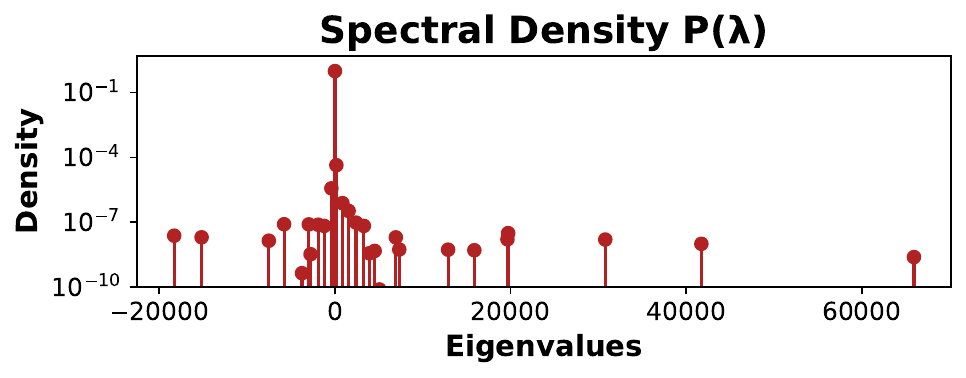}
        \caption{Dataset Fraction 0.0001}
        \label{fig:smallest}
    \end{subfigure}
    \caption{Spectral density plots of the $70M$ parameter Pythia model trained on varying fractions of the Pile dataset using the same data and random vector seed.}
    \label{fig:pythia70b}
\end{figure}

We utilize the CoLA \citep{potapczynski2023cola} library to compute the spectral approximation of the Hessian. This involves leveraging the relationship between the Lanczos $\mT$ matrix and Gaussian quadrature \citep{meurant2006lanczos,granziol2019deep}. However, these concepts are highly specialized and may not be familiar to all readers. Therefore, we provide a high-level overview without delving into the intricate mathematical details.

\autoref{fig:pythia70b} illustrates the spectral density of a $70M$ parameter Pythia model trained on different subsets of the Pile dataset \citep{gao2020pile}. Specifically, as we decrease the number of training samples—from 1\% (\autoref{fig:small}) to 0.1\% (\autoref{fig:smaller}) and further to 0.01\% (\autoref{fig:smallest})—we observe an increase in the largest eigenvalue and an increase in the mass of negative spectral density. These phenomena are consistent with previous studies on ResNets and VGGs, where spiked Wigner random matrix theory models have been employed to understand such behaviors \citep{granziol2022learning}.

Future work aimed at establishing a tighter empirical bound could explore advanced random matrix theory techniques \citep{bun2017cleaning}, potentially utilizing the variance of the Hessian \citep{granziol2022learning}. In this study, we adopt a simpler approach by shifting the Hessian spectrum by the magnitude of the largest negative eigenvalue, thereby ensuring a positive semi-definite Hessian and providing a trivial upper bound.

From \autoref{fig:smaller}, we observe that the variance of each estimator remains low and that convergence is achieved with relatively few Lanczos iterations. Additionally, \autoref{fig:spectralstats} demonstrates that varying the random vector introduces minimal variance, while different data subsets do exhibit some variance, as indicated by the error bars computed over three different seeds (see \autoref{fig:fixedvec}). For clarity, \autoref{fig:s_0_v_1_size_0_01} provides another example of the spectrum with a different seed vector on the same subsampled dataset, showing negligible differences compared to \autoref{fig:small}.

\begin{figure}[ht!]
    \centering
    \begin{subfigure}[b]{0.33\textwidth}
        \centering
        \includegraphics[width=\textwidth]{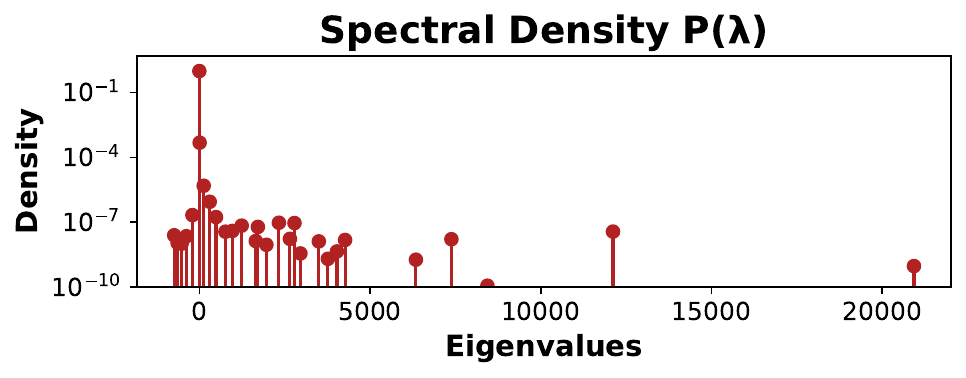}
        \caption{with Different Vector}
        \label{fig:s_0_v_1_size_0_01}
    \end{subfigure}
    \hfill
    \begin{subfigure}[b]{0.33\textwidth}
        \centering
        \includegraphics[width=\textwidth]{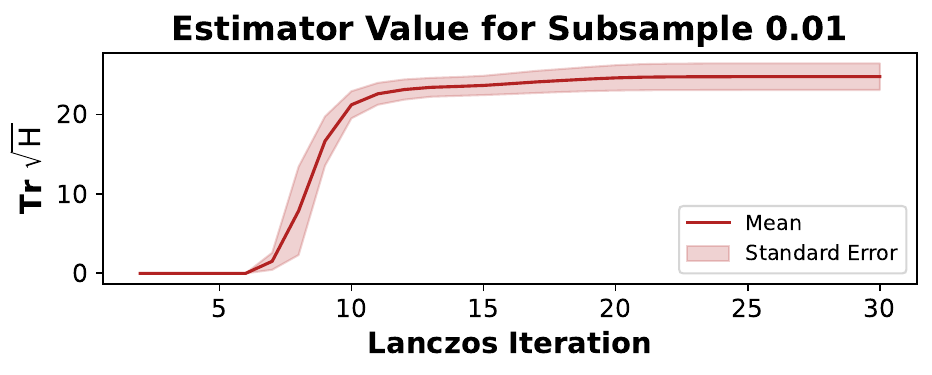}
        \caption{Convergence with Iterations}
        \label{fig:fixedvec}
    \end{subfigure}%
    \hfill
    \begin{subfigure}[b]{0.33\textwidth}
        \centering
        \includegraphics[width=\textwidth]{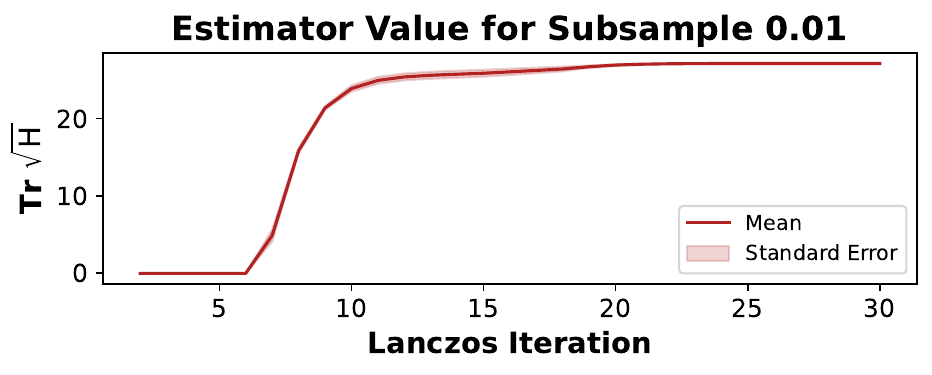}
        \caption{with Fixed Data Subset}
        \label{fig:fixedataset}
    \end{subfigure}%
    \caption{Comparison of spectral density and $\Tr(\sqrt{\mH})$ estimations for different subsample sizes and configurations.}
    \label{fig:spectralstats}
\end{figure}

With confidence in the accuracy of our estimations for $\Tr(H^{1/2})$ and the Hessian spectrum, we can interpret the implications for model quantization. Despite the high dimensionality and the presence of many distinct eigenvalues, the Hessian spectrum decays rapidly in density. This indicates that $\Tr(H^{1/2})$ grows sublinearly with the model dimension, rather than exhibiting the worst-case linear scaling. Consequently, as model size increases, the ratio $L(h)/D$ is expected to decrease, allowing for a more favorable tradeoff between the bitrate and the quantization gap. This supports the hypothesis that larger models on the compute-optimal frontier are more easily quantizable, thereby contributing to their improved generalization performance.

\subsection{Stochastic Trace Estimation Improvement with Model Size}
\label{sec:improvement with depth}
Here, we provide the proof that for a spectrum independent of model dimension, the stochastic trace estimator has a bigger signal to noise ratio as a function of dimension.
\begin{lemma}
Let $\vu \in \mathbb{R}^{P\times 1}$ random vector, where $\vu_{i}$ is zero mean and unit variance and finite $4$th moment $\mathbb{E}[\vu_{i}^{4}] = m_{4}$. Then for $\mH \in \mathbb{R}^{P\times P}$, we have
\begin{enumerate}[(i)]
    \item $\mathbb{E}[\vu^{T}\mH\vu] = \Tr \mH$,
    \item $\Var [\vu^{T}\mH\vu] \leq (2+m_{4}) \Tr (\mH^{T}\mH)$.
\end{enumerate}
\end{lemma}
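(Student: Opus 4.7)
Part (i) is immediate by linearity and the (implicit, since this is a Hutchinson-style estimator) assumption that the entries $u_{i}$ are uncorrelated: writing $\vu^{T}\mH\vu = \sum_{i,j} H_{ij} u_{i} u_{j}$ and taking expectations gives $\mathbb{E}[u_{i}u_{j}] = \delta_{ij}$, so only the diagonal survives and we obtain $\Tr \mH$.

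For (ii), the plan is to compute $\mathbb{E}[(\vu^{T}\mH\vu)^{2}]$ explicitly and then subtract $(\Tr \mH)^{2}$. Expanding gives a quadruple sum $\sum_{i,j,k,l} H_{ij}H_{kl}\,\mathbb{E}[u_{i}u_{j}u_{k}u_{l}]$. Under independence with zero mean, unit variance, and fourth moment $m_{4}$, Isserlis-type bookkeeping yields
\begin{equation*}
\mathbb{E}[u_{i}u_{j}u_{k}u_{l}] = \delta_{ij}\delta_{kl} + \delta_{ik}\delta_{jl} + \delta_{il}\delta_{jk} + (m_{4}-3)\,\delta_{ijkl},
\end{equation*}
where $\delta_{ijkl}$ is $1$ iff all four indices coincide (the $-3$ corrects the triple-count when $i=j=k=l$). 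Substituting and collecting terms, the three Wick pairings contribute $(\Tr \mH)^{2} + \Tr(\mH^{T}\mH) + \Tr(\mH^{2})$, and the diagonal correction contributes $(m_{4}-3)\sum_{i} H_{ii}^{2}$.

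Subtracting $(\Tr \mH)^{2}$ leaves
\begin{equation*}
\Var[\vu^{T}\mH\vu] \;=\; \Tr(\mH^{T}\mH) + \Tr(\mH^{2}) + (m_{4}-3)\sum_{i} H_{ii}^{2}.
\end{equation*}
To close out, I bound each remaining term by $\Tr(\mH^{T}\mH) = \|\mH\|_{F}^{2}$: by Cauchy--Schwarz, $\Tr(\mH^{2}) = \sum_{i,j} H_{ij} H_{ji} \le \|\mH\|_{F}^{2}$; and trivially $\sum_{i} H_{ii}^{2} \le \sum_{i,j} H_{ij}^{2} = \|\mH\|_{F}^{2}$. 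The negative part of $(m_{4}-3)$ can only decrease the variance, so dropping the $-3$ is a valid upper bound, giving $(m_{4}-3)\sum_{i} H_{ii}^{2} \le m_{4}\,\Tr(\mH^{T}\mH)$. Summing the three upper bounds yields $(1 + 1 + m_{4})\,\Tr(\mH^{T}\mH) = (2+m_{4})\,\Tr(\mH^{T}\mH)$.

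\textbf{Main obstacle.} The only subtlety is the fourth-moment bookkeeping: one must correctly enumerate the three Wick pairings while accounting for the overcounting on the full-diagonal $i=j=k=l$, which is the source of the $(m_{4}-3)$ correction. Once this combinatorial identity is in hand, the rest is just Cauchy--Schwarz and the elementary inequality $\sum_{i} H_{ii}^{2} \le \|\mH\|_{F}^{2}$, together with the observation that we never need $\mH$ to be symmetric.
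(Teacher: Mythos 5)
Your proof is correct and follows essentially the same route as the paper: expand $\E[(\vu^{T}\mH\vu)^{2}]$ as a quadruple sum, evaluate the fourth moments, and collect the pairings into $(\Tr\mH)^{2}$, $\Tr(\mH^{2})$, $\Tr(\mH^{T}\mH)$, and a diagonal term. In fact your bookkeeping is slightly more careful than the paper's: you use the correct coefficient $(m_{4}-3)$ on the full-diagonal term (the paper writes $m_{4}\,\delta_{i,j,k,l}$, which overcounts the Wick pairings when all indices coincide, harmless here only because it inflates an upper bound) and you make explicit the Cauchy--Schwarz step that lets the bound hold without assuming $\mH$ symmetric.
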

\begin{proof}
For the expectation, we see \[
\mathbb{E}[\vu^{T}\mH\vu] = \sum_{i,j=1}^{P}\mH_{i,j}\mathbb{E}[\vu_{i}\vv_{j}]= \sum_{i=1}^{P}\mH_{i,i} = \Tr \mH.
\]
For the variance, we have 
\begin{align*}
 \mathbb{E}[||\vu^{T}\mH\vu||^{2}]  &= \sum_{i,j}\sum_{k,l}\mH_{i,j}\mH_{k,l}^{T}\mathbb{E}[\vu_{i}\vu_{j}^{T}\vu_{k}\vu_{l}^{T}]\\
 &=  \sum_{i,j}\sum_{k,l}\mH_{i,j}\mH_{k,l}^{T}[\delta_{i,j}\delta_{k,l}+\delta_{i,l}\delta_{j,k}+\delta_{i,k}\delta_{j,l}+m_{4}\delta_{i,j,k,l}]\\
 &= (\Tr \mH)^{2}+(2+m_{4})\Tr(\mH^{2}),
\end{align*}
whence (ii) follows.

Let us consider the signal to noise ratio for some positive definite $\mH \succ c\mI$
\begin{equation}
    \begin{aligned}
         & \frac{\sqrt{\Var [\vu^{T}\mH\vu]}}{\mathbb{E}[\vu^{T}\mH\vu]} = \sqrt{2+m_{4}}\sqrt{\frac{\Tr \mH^{2}}{\Tr^{2}\mH}} = \sqrt{\frac{2+m_{4}}{P}}\sqrt{\frac{\langle \lambda^{2} \rangle}{\langle \lambda \rangle}}
    \end{aligned}
\end{equation}
where we denote the mean eigenvalue $\langle \lambda \rangle$ and the mean square eigenvalue similarly.
\end{proof}
\begin{remark}
    Note that $m_{4}$ is $3$ for the Gaussian case and $1$ for the Hutchinson trace estimator where the entries are $\pm 1$ with probability half, which justifies its use.
\end{remark}

\subsection{Deriving the impact of Low precision Lanczos}
Consider a number taken from our Hessian matrix $a_{i,j}$, which can be represented as $(-1)^{s}2^{e}s$. As the exponent for FP16 has $5$ bits, it has a range of $2^{5}-1$. Since the exponent is always integer, there is no loss of information in the range. This means the error is in the significand, which has 6 bits after the 1. Thus, we have $\epsilon = 10^{-7}$.

Then, we see that \begin{align*}
    \tilde{\mH} &= \begin{bmatrix}
a_{1,1}(1 + \mathcal{N}(0, \epsilon)) & a_{1,2}(1 + \mathcal{N}(0, \epsilon)) & \cdots & a_{1,n}(1 + \mathcal{N}(0, \epsilon)) \\
a_{2,1}(1 + \mathcal{N}(0, \epsilon)) & a_{2,2}(1 + \mathcal{N}(0, \epsilon)) & \cdots & a_{2,n}(1 + \mathcal{N}(0, \epsilon)) \\
\vdots & \vdots & \ddots & \vdots \\
a_{m,1}(1 + \mathcal{N}(0, \epsilon)) & a_{m,2}(1 + \mathcal{N}(0, \epsilon)) & \cdots & a_{m,n}(1 + \mathcal{N}(0, \epsilon))
\end{bmatrix}\\
&= \mH + \begin{bmatrix}
a_{1,1} \mathcal{N}(0, \epsilon) & a_{1,2} \mathcal{N}(0, \epsilon) & \cdots & a_{1,n} \mathcal{N}(0, \epsilon) \\
a_{2,1} \mathcal{N}(0, \epsilon) & a_{2,2} \mathcal{N}(0, \epsilon) & \cdots & a_{2,n} \mathcal{N}(0, \epsilon) \\
\vdots & \vdots & \ddots & \vdots \\
a_{m,1} \mathcal{N}(0, \epsilon) & a_{m,2} \mathcal{N}(0, \epsilon) & \cdots & a_{m,n} \mathcal{N}(0, \epsilon)
\end{bmatrix}.
\end{align*}

Now under certain assumptions on the elements of the perturbation matrix (essentially the $a_{i,j}$ does not vary too wildly or have wild dependencies) this becomes a Gaussian orthogonal ensemble (GOE) again. Then using the Frobdyenius Norm, we see that the spectral width will be of order $\epsilon\sqrt{\langle \lambda^{2} \rangle}$, which depends on the square root of the average eigenvalue squared of $\mH$. Anything within this will be noise. This is because $\sum_{i,j}a^{2}_{i,j} = P\langle \lambda^{2} \rangle$. An obvious upper bound of this would be $\epsilon \lambda_{1}$ but this will likely be super loose. Note that the vast majority of the already broadened spectrum is already very close to zero, so we would expect this to be even more extreme for the unbroadened version. A better strategy might be to sample the noisy version of $a_{i,j}^{2}$ perhaps using the diagonal approximation, and note that in expectations we expect the square to be $(1+\epsilon^{2})$ the size of its non noisy counter part, which gives an an estimation equation
\[
   \sqrt{ \frac{P\epsilon^{2}\sum_{k}^{N}a_{i,j}^{2}}{N(1+\epsilon^{2})}}. 
\]

\subsection{Stochastic Lanczos Quadrature Proof}
\label{sec:slQ}
\begin{theorem}
    Consider a symmetric positive definite matrix $\mA \in \mathbf{R}^{n\times n}$ with eigenvalues enumerated in reverse order of size $\lambda_{1}\ge\lambda_{2}\cdots\ge\lambda_{n}$ and condition number $\kappa = \frac{\lambda_{1}}{\lambda_{n}}$. For $\epsilon,\eta \in (0,1)$ and SLQ parameters satisfying
    \begin{enumerate}[(i)]
        \item $m \ge \frac{\log \frac{K}{\epsilon}}{2 \log \frac{\sqrt{\kappa}+1}{\sqrt{\kappa}-1}}>= \frac{\sqrt{\kappa}}{4}\log \frac{K}{\epsilon}$ Lanczos steps
        \item  $n_{v} \ge \frac{24}{\epsilon^{2}} \log \frac{2}{\eta}$ Rademacher vectors,
    \end{enumerate}
where $K = (\lambda_{max}-\lambda_{min})(\sqrt{\kappa}-1)^{2}$. The output $\Gamma$ of stochastic lanczos quadrature is such that
\begin{equation}
    \Pr \bigg[ \bigg|\frac{\Tr(\sqrt{\mA})-\Gamma}{\Tr(\sqrt{\mA})}\bigg| \leq \epsilon  \bigg] \geq 1-\eta
\end{equation}
\end{theorem}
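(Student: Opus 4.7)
The plan is to decompose the total error of the SLQ estimator $\Gamma$ into two independent contributions and control each separately: a deterministic \emph{quadrature error}, coming from approximating each quadratic form $v_i^{\top}\sqrt{\mA}v_i$ by an $m$-step Lanczos-Gauss rule, and a probabilistic \emph{Hutchinson (trace) error}, coming from estimating $\Tr(\sqrt{\mA})=\mathbb{E}_v[n\cdot v^{\top}\sqrt{\mA}v]$ by an average over $n_v$ Rademacher probes. If each contribution is tuned to relative tolerance $\epsilon/2$, the triangle inequality and a union bound give the conclusion.

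For the quadrature error I would invoke the classical equivalence between $m$-step Lanczos started from a unit vector $v$ and the $m$-point Gauss quadrature rule for the Riemann-Stieltjes integral $\int f\,d\mu_v$ against the weighted spectral measure of $\mA$. This rule is exact on polynomials of degree $\le 2m-1$, so for any such polynomial $p$,
\begin{equation*}
    \bigl|v^{\top}f(\mA)v - \mathrm{SLQ}_m(v)\bigr| \le 2\|v\|^{2}\,\sup_{x\in[\lambda_n,\lambda_1]}|f(x)-p(x)|.
\end{equation*}
I would then specialize to $f(x)=\sqrt{x}$ and apply Bernstein's theorem on best polynomial approximation of a function analytic on an ellipse surrounding $[\lambda_n,\lambda_1]$ but avoiding the branch point at $0$. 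The relevant Bernstein ellipse parameter is $\rho=\frac{\sqrt{\kappa}+1}{\sqrt{\kappa}-1}$, producing an approximation error of order $K\,\rho^{-2m}$ with the paper's constant $K=(\lambda_{\max}-\lambda_{\min})(\sqrt{\kappa}-1)^{2}$. Solving $K\rho^{-2m}\le \epsilon$ for $m$ yields condition (i), and the weaker $\frac{\sqrt{\kappa}}{4}\log(K/\epsilon)$ form follows from $\log\rho \ge 2/\sqrt{\kappa}$ for $\kappa\ge 1$.

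For the trace error I would use a Rademacher-Hutchinson concentration bound. Writing $B=\sqrt{\mA}\succeq 0$, the centered quadratic forms $v_i^{\top}Bv_i - \Tr B$ are sub-exponential with variance proxy controlled by $\|B\|_F^{2}$. Either Hanson-Wright or the direct Hoeffding-type computation for Rademacher quadratic forms (as in Roosta-Mahoney) gives
\begin{equation*}
    \Pr\!\left[\Bigl|\tfrac{1}{n_v}\sum_{i=1}^{n_v}v_i^{\top}Bv_i - \Tr B\Bigr| > t\right] \le 2\exp\!\left(-\tfrac{n_v t^{2}}{c\,\|B\|_F^{2}}\right).
\end{equation*}
Since $B\succeq 0$ implies $\|B\|_F \le \Tr(B) = \Tr(\sqrt{\mA})$, choosing $t = (\epsilon/2)\Tr(\sqrt{\mA})$ and inverting for $n_v$ produces condition (ii) with the constant $24$ absorbing the universal sub-exponential constant.

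The main obstacle is the quadrature step: one must ensure the polynomial approximation bound is \emph{uniform in the starting vector} $v_i$ so it can be combined with the random sampling, and one must handle the fact that $\sqrt{x}$ is only analytic on a slit plane, so the Bernstein ellipse cannot enclose the origin. This is what forces the geometric rate to involve $\sqrt{\kappa}$ rather than $\kappa$, and careful tracking of the prefactor is what yields the specific form of $K$. The Hutchinson step is comparatively routine, though one has to remember that $v^{\top}Bv$ is a quadratic form rather than a sub-Gaussian linear sum, so a Hanson-Wright-type inequality rather than plain Hoeffding is the appropriate tool.
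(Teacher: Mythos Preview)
Your proposal is correct and follows essentially the same route as the paper, which simply specializes the general argument of Ubaru, Chen, and Saad (2017) to $f(x)=\sqrt{x}$: a quadrature error bounded via Gauss--Lanczos exactness and a Bernstein-ellipse analyticity argument giving the rate $\rho=\tfrac{\sqrt{\kappa}+1}{\sqrt{\kappa}-1}$, combined with the Roosta-type Rademacher trace concentration for the Hutchinson step. Your write-up is in fact more detailed than the paper's own sketch; the only cosmetic difference is that the paper phrases the quadrature bound via Chebyshev coefficients and Cauchy--Schwarz rather than polynomial best approximation directly, but these are equivalent.
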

\begin{proof}
    The proof follows trivially from \citet{ubaru2017fast}, where we simply take the more general proof and instead of the general function $f(\mA)$, we take $f(x) = \sqrt{x}$. The second inequality for $m$ is directly from the paper, but the tighter bound is also available just buried. 
\end{proof}

The proof sketch goes as follows. We bound the error from the Gauss quadrature rule. We start with a function analytic in the interval $[-1,1]$. Knowing that the Gauss quadrature rule is exact for any polynomial up to degree $2m+1$,  we bound the sum from $2m+1$ to infinity using Cauchy-Schwarz. We use results from Chebyshev coefficients, symmetry and the interval boundaries to get 
\[
    |I-I_{m}| \leq \frac{4\sqrt{\lambda_{1}}}{(\rho^{2}-1)\rho^{2m}},
\]
where $\rho$ is the sum of the major and minor axis of the Bernstein elipse. We shift the spectrum so that it is in the interval $[-1,1]$, e.g this implies the factor of $\frac{\lambda_{1}-\lambda_{n}}{2}$. The shifted function is not analytic for $\alpha = -\frac{-\kappa + 1}{\kappa - 1}$, so this will serve as our major axis. Now as $\frac{x^{2}}{a^{2}}+\frac{y^{2}}{b^{2}} = 1$ and the focus is $1 = \sqrt{a^{2}-b^{2}}$, where we take our major axis $a$ in this case to be $\alpha$. We then have our rate of convergence $\rho = a+b$ through some algebra to be $\frac{\sqrt{\kappa}+1}{\sqrt{\kappa}-1}$. This gives us the value of $K$. This is combined with the error of the trace estimator from \citet{roosta2015improved} and Cauchy-Schwartz to obtain the final result.

\end{document}